\def\eqref#1{equation~\ref{#1}}
\def\1{\bm{1}}
\DeclareMathAlphabet{\mathsfit}{\encodingdefault}{\sfdefault}{m}{sl}
\SetMathAlphabet{\mathsfit}{bold}{\encodingdefault}{\sfdefault}{bx}{n}
\newtheorem{theorem}{Theorem}[section] 
\newtheorem{lemma}[theorem]{Lemma}
\newtheorem{definition}[theorem]{Definition}
\newtheorem{assumption}[theorem]{Assumption}
\renewcommand{\arraystretch}{1.18}
\newcommand{\accwm}[2]{\makecell{#1\\ \scriptsize WM-ACC: #2}}
\title{Robust GNN Watermarking via Implicit Perception of Topological Invariants}
\author{
  Jipeng Li \\
  Department of Electrical and Computer Engineering \\
  University of California, Davis \\
  Davis, CA 95616, USA \\
  \texttt{jipli@ucdavis.edu}
  \\
  Yanning Shen \\
  Department of Electrical Engineering and Computer Science \\
  University of California, Irvine \\
  Irvine, CA 92697, USA \\
  \texttt{yannings@uci.edu}
}
\begin{document}

\maketitle

\begin{abstract}
Graph Neural Networks (GNNs) are valuable intellectual property, yet most watermarks use backdoor triggers that break under common model edits and create ownership ambiguity. To tackle this challenge, we present \textbf{InvGNN-WM}, which ties ownership to a model’s implicit perception of a graph invariant, enabling trigger-free, black-box verification with negligible task impact. A lightweight head predicts normalized algebraic connectivity in an owner-private carrier set; a sign-sensitive decoder outputs bits, and a calibrated threshold $\tau(\alpha)$ controls the false-positive rate. Across diverse node and graph classification datasets and backbones, \textbf{InvGNN-WM} matches clean accuracy while yielding higher watermark accuracy than trigger- and explanation-based baselines. It remains strong under unstructured pruning, fine-tuning, and post-training quantization; plain knowledge distillation (KD) weakens the mark, while KD with a watermark loss (KD+WM) restores it. We provide guarantees for imperceptibility and robustness, and prove that exact removal is NP-complete.
\end{abstract}

\section{Introduction}
\label{sec:introduction}

Graph Neural Networks (GNNs) find applications in various domains, such as drug discovery, social networks, and recommendation~\citep{wu2021comprehensive, Zhao2021RG, gilmer2017neural, Xu2023EuroSP, hu2020open}. As training depends on significant proprietary data, released models are valuable intellectual property and face the risks of redistribution and plagiarism~\citep{adi2018turning}. Although watermarking enables post-hoc verification, many GNN methods use trigger keys~\citep{zhang2021graphbackdoor}: the model is trained to respond to graphs outside the task distribution \(\mathcal{D}_{\text{task}}\) (OOD). Fine-tuning, pruning, and distillation use only \(\mathcal{D}_{\text{task}}\), so trigger-specific parameters get no preserving signal and drift or are pruned, weakening the mark~\citep{li2021neural}. OOD triggers also hinder black-box verification: owners must set a threshold without the impostor distribution on private triggers, causing unstable false-positive control~\citep{saha2022watermarking}. Similar issues arise in vision when a watermark is detached from normal inference~\citep{uchida2017embedding}.

To address the aforementioned concerns, we ask the following research question: \emph{Can ownership be tied to the same computation that solves the learning task, so that adding the watermark leaves utility essentially unchanged?} We introduce \textbf{InvGNN-WM}, which binds ownership to a model’s implicit perception of a graph invariant. Concretely, the GNN learns to predict an invariant \(I(G)\) on owner-private carrier graphs; a lightweight head maps graph-level embeddings to an estimate of \(I(G)\); a separable sign-sensitive decoder turns the estimate into bits; and a calibrated threshold \(\tau(\alpha)\) sets the false-positive rate for black-box verification. Our theory and algorithms are stated for a \emph{generic} permutation-invariant functional \(I(G)\) that admits a Lipschitz predictor and a separable sign-sensitive decoder; in experiments we instantiate \(I(G)\) with the normalized algebraic connectivity \(\tilde{\lambda}_2\)~\citep{fiedler1973algebraic, chung1997spectral} as a concrete and informative choice. Because expressive message-passing GNNs encode global structure~\citep{gilmer2017neural, Xu2023EuroSP}, coupling ownership to invariant perception ties the mark to the model’s core logic rather than to exogenous patterns.

\emph{On the theory side}, we formalize a quantitative coupling between watermark removal and task degradation. We define a robustness margin on the carrier set that measures how far watermark scores lie from the decision boundary, and we summarize common edits—fine-tuning, unstructured pruning, and distillation—into a composite drift budget. Under a local Polyak--\L{}ojasiewicz condition on the task loss and a Lipschitz bound on the perception head, any edit that is strong enough to flip watermark bits must exceed the margin and therefore incurs a nontrivial increase in task loss. In other words, successful removal provably trades off against utility. In complement, we also show that the watermark can be embedded with negligible task impact by choosing a small watermark weight and controlling the head’s sensitivity via spectral normalization~\citep{miyato2018spectral}. The verification threshold is calibrated to a target false-positive level, and verification errors decay exponentially in the key length~\citep{hoeffding1963probability, janson2004largedeviations}. Finally, exact removal is NP-complete under our separable, sign-sensitive decoder, which explains why practical attacks resort to heuristic edits already covered by the margin analysis.

\emph{Empirically}, InvGNN-WM matches clean task accuracy across diverse node- and graph-classification datasets and backbones while delivering high watermark accuracy. The mark remains stable under unstructured pruning, fine-tuning, and post-training quantization; plain KD weakens the mark, while a simple KD with a watermark loss (KD+WM) restores it. Targeted “killshot’’ edits that collapse trigger-based designs have a limited effect on our invariant-coupled scheme.

\noindent\textbf{Contributions.}
\textbf{(1) Method.} \textit{InvGNN-WM} ties ownership to a GNN’s \emph{implicit perception} of a graph invariant, enabling trigger-free, black-box verification with minimal task impact.
\textbf{(2) Theory.} We provide guarantees for imperceptibility and robustness, establish key uniqueness across independent carrier sets, and prove exact removal is NP-complete.
\textbf{(3) Evaluation.} Across datasets and backbones, InvGNN-WM matches clean accuracy, achieves higher watermark fidelity than prior GNN watermarks, and remains reliable under pruning, fine-tuning, and quantization (with recovery under KD+WM).

\section{Related Work}
\label{sec:related_work}

Protecting the intellectual property (IP) of Graph Neural Networks (GNNs) \citep{wu2021comprehensive,Zhao2021RG} has drawn increasing attention, with digital watermarking emerging as a practical tool. Methods broadly fall into \emph{white-box} and \emph{black-box} settings. White-box methods embed watermarks into parameters or internal activations and require model access to verify \citep{uchida2017embedding,zhang2018protecting, huang2023graphlime}; they can be powerful but are impractical when only query access is available. Black-box methods aim to verify ownership via API queries and are therefore attractive for real-world deployment \citep{adi2018turning,zhang2018protecting,uchida2017embedding,bansal2022certified}.

\noindent\textbf{Backdoor-based watermarking for GNNs.}
The dominant black-box paradigm adapts backdoor ideas: train the model to react to a secret key set and later verify via predictions on those keys \citep{adi2018turning}. For GNNs, \citet{Zhao2021RG} propose a random graph trigger for node classification, while \citet{Xu2023EuroSP} extend to both node and graph classification and to inductive/transductive regimes. These approaches demonstrate high capacity and simple verification, yet inherit known weaknesses of backdoors: triggers are exogenous to task logic, enabling removal or attenuation by fine-tuning, pruning, and especially distillation-based laundering \citep{li2021neural}. Beyond GNNs, a broader black-box watermarking literature explores multi-bit schemes, certified detection via randomized smoothing \citep{bansal2022certified}, and robustness under distributional shifts.

\noindent\textbf{Function-integrated watermarking.}
A more recent line couples ownership to the model’s internal \emph{reasoning} rather than to synthetic triggers. For GNNs, explanation-based watermarking links ownership to feature attributions of secret subgraphs \citep{saha2022watermarking,Downer2025EXPL}, sidestepping data pollution and mitigating ambiguity. Outside GNNs, parameter- or representation-level embedding frameworks like \citet{rouhani2018deepsignsgenericwatermarkingframework} (DeepSigns) and \citet{Le_Merrer_2019} (frontier stitching) aim to integrate watermarks with decision geometry, informing our design choices.
\section{Preliminaries}
\label{sec:preliminaries}

This section establishes the technical foundation for our work. We first define the notation for Graph Neural Networks (GNNs), then introduce the mechanism of using the graph Laplacian spectrum for watermarking. We conclude by formalizing the watermarking framework, including the threat model and the assumptions underpinning our theoretical guarantees. Throughout, \(\mathcal{D}_{\text{task}}\) denotes the data distribution over simple, undirected graphs, \(\mathcal{G}\) is the space of such graphs, and \([m]:=\{1,\dots,m\}\).

\subsection{Graph Neural Networks (GNNs)}
\label{sec:gnn-notation}

A simple undirected graph \(G=(V,E)\in\mathcal{G}\) consists of \(n:=|V|\) nodes with feature vectors \(x_v\in\mathbb{R}^{d_f}\) and a set of edges \(E\subseteq\binom{V}{2}\). A message-passing GNN, parameterized by \(\theta\in\mathbb{R}^{d}\), computes node representations \(h_v^{(\ell)}\) across \(L\) layers. Initial representations \(h_v^{(0)}:=x_v\), \(\forall \ell=1\ldots L\) are updated as:
\begin{align}
m_v^{(\ell)}
  &:= \mathrm{Aggregate}^{(\ell)}\!\bigl(\{h_u^{(\ell-1)}:u\in\mathcal{N}(v)\}\bigr), \ h_v^{(\ell)}
:= \mathrm{Update}^{(\ell)}\!\bigl(h_v^{(\ell-1)},m_v^{(\ell)}\bigr),
\end{align}
where \(\mathcal{N}(v)\) is the set of neighbors of node \(v\). A final permutation-invariant \(\mathrm{Readout}\) function produces a graph-level embedding. The GNN is trained by minimizing a supervised loss \(\mathcal{L}_{\text{task}}(\theta)\).

\subsection{A Watermark from the Laplacian Spectrum}
\label{sec:spectral}

We embed the watermark through a global graph property that the GNN already uses for reasoning. The Laplacian spectrum captures global structure, linking the watermark to the model’s computation. While our theoretical analysis applies to any generic permutation-invariant graph functional \(I(G)\), we \emph{instantiate} it with the normalized algebraic connectivity \(\tilde{\lambda}_2\) for its stability and interpretability. Let \(\mathbf{A}\in\{0,1\}^{n\times n}\) be the adjacency matrix of a graph and \(\mathbf{D}:=\operatorname{diag}(\mathbf{A}\mathbf{1})\) be its degree matrix. The combinatorial Laplacian is \(\mathbf{L}:=\mathbf{D}-\mathbf{A}\), and its eigenvalues, \(0=\lambda_1\le\dots\le\lambda_n\), form the graph's Laplacian spectrum. We focus on \(\lambda_2\), the algebraic connectivity. In practice, we add a small diagonal perturbation $\varepsilon\mathbf I$ (with $\varepsilon=10^{-6}$) to improve numerical stability when computing eigenpairs; the analysis only needs continuity, not distinct eigenvalues.

We introduce a scalar perception head \(s_\theta:\mathcal{G}\to[0,1]\) that estimates a normalized invariant from a graph's embedding. This head allows the GNN to perceive the graph property. For a private set of carrier graphs \(\{G_W^{(k)}\}_{k=1}^{m}\subset\mathcal{G}\), the robustness margin of a model with parameters \(\theta\) is defined as:
\[
 \kappa_{\mathrm{marg}}(\theta)
 \;:=\; \min_{k\in[m]}\,\bigl|\,s_\theta(G_W^{(k)})-\tfrac{1}{2}\,\bigr|.
\]
This margin, \(\kappa_{\mathrm{marg}}\), quantifies the minimum change in the head's output required to flip any embedded bit, serving as a measure of watermark resilience.

\subsection{Watermarking Framework and Assumptions}
\label{sec:assumptions}
A secure and practical watermarking scheme should satisfy four key properties \citep{Zhao2021RG, Xu2023EuroSP, Downer2025EXPL}:
\\
\textbf{\textbullet~ 
Imperceptibility:} Embedding the watermark should not noticeably harm primary task performance.
\\
\textbf{\textbullet~
Robustness:} The watermark must remain detectable after common model modifications, like fine-tuning, pruning, or knowledge distillation.\\
\textbf{\textbullet~ 
Uniqueness:} Secret keys must yield statistically distinct watermarks to prevent ownership disputes.\\
 \textbf{\textbullet~
 Unremovability (Hardness):} Watermarks should be hard to remove without the secret key.

\paragraph{Threat Model.} We consider a gray-box attacker who knows the GNN architecture and the watermarking algorithm but does not know the owner's secret key. The key is derived from a private set of \emph{carrier graphs}, \(\mathcal{G}_{W} =\{G_W^{(1)},\dots,G_W^{(m)}\}\), which are small graphs, disjoint from \(\mathcal{D}_{\text{task}}\).

\paragraph{Assumptions.} Our theoretical guarantees rely on the following assumptions. The detailed protocols for satisfying them are in Appendix~\ref{sec:appendix-assumptions}.

\begin{assumption}[Graph-level Separation]\label{ass:data}
The carrier graph set is disjoint from the task data support, i.e., \(\mathcal{G}_{W}\cap\mathrm{supp}(\mathcal{D}_{\text{task}})=\varnothing\). This is enforced by a sampling protocol that combines graph rewiring with hash-based collision checks (see Appendix~\ref{sec:appendix-protocol-data} for details).
\end{assumption}

\begin{assumption}[Empirical \(\rho\)-mixing]\label{ass:rho-mix}
The carrier graphs are weakly correlated. Formally, there exists a constant \(\rho_{0}\le 10^{-3}\) such that for all \(i\neq j\) and any measurable function \(f\colon\mathcal{G}\!\to\![0,1]\), we have \(\bigl|\operatorname{Corr}\!\bigl(f(G_W^{(i)}),f(G_W^{(j)})\bigr)\bigr| \le \rho_{0}\).
\end{assumption}

\begin{assumption}[Perception Lipschitzness]\label{ass:lipschitz}
The perception head \(s_\theta\) is \(L_s\)-Lipschitz with respect to its parameters \(\theta\) in a neighborhood of the trained solution. This means \(\bigl|s_{\theta+\Delta\theta}(G)-s_{\theta}(G)\bigr| \le L_s\,\|\Delta\theta\|\) for small perturbation \(\Delta\theta\).
In practice, we enforce this by weight clipping on the perception head and an explicit penalty on $\|\nabla_{\theta}s_\theta\|$; spectral normalization on the head further controls input-Lipschitzness and helps keep gradients bounded.
\end{assumption}

\section{Proposed Method: InvGNN-WM}
\label{sec:method}

We introduce \textbf{Invariant-based Graph Neural-Network Watermarking (InvGNN-WM)}, a framework that embeds ownership by training a GNN to perceive a topological invariant. The core of the method is a differentiable perception function that links the GNN's parameters to a graph property, such as the algebraic connectivity \(\lambda_2\). This function is optimized via an auxiliary loss, weaving the watermark into the model’s weights without altering the GNN's message-passing architecture.

\subsection{Watermark Design}
\label{subsec:design}

The watermark is defined by three components: a private set of \(m\) \textbf{carrier graphs} \(\mathcal{G}_W\), a secret key \(W\) induced by the carriers, and an \textbf{invariant-perception function} \(s_\theta(G)\) that connects them. The owner first generates \(\mathcal{G}_W=\{G_W^{(k)}\}_{k=1}^m\) using the adaptive rewiring protocol from \Cref{sec:assumptions}, ensuring the graphs are out-of-support but statistically similar to the task data. The secret key \(W=(w_k)_{k=1}^m\) is then deterministically induced by the normalized algebraic connectivity of these graphs:
\[
 w_k \;:=\; \mathbf{1}\!\left[\tilde\lambda_2\!\bigl(G_W^{(k)}\bigr)\ge \tfrac12\right], \quad k=1,\dots,m.
\]
The perception function \(s_\theta(G)\in[0,1]\) is a lightweight, one-layer MLP head attached to the GNN’s graph-level representation. The head is trained to regress the normalized algebraic connectivity:
\begin{equation}
 \tilde{\lambda}_{2}(G)=
 \frac{\lambda_{2}(G)-\lambda_{\min}}
      {\lambda_{\text{scale}}-\lambda_{\min}},
 \label{eq:lambda_norm}
\end{equation}
where \(\lambda_{\min}\) and \(\lambda_{\text{scale}}\) are the empirical \(5^{\text{th}}\) and \(95^{\text{th}}\) percentiles of \(\lambda_2\) on the task data, computed once and then frozen. All weights in the perception head are spectrally normalized to satisfy the Lipschitz condition in Assumption~\ref{ass:lipschitz}.

\subsection{Embedding via Dual-Objective Optimization}
\label{subsec:objective}

The watermark is embedded by training the GNN to minimize a dual-objective loss function:
\begin{equation}
 J(\theta)=
 \mathcal{L}_{\text{task}}(\theta)
 +\beta_{\text{wm}}\,\mathcal{L}_{\text{wm}}(\theta).
 \label{eq:combined_loss}
\end{equation}
The first term, \(\mathcal{L}_{\text{task}}\), is the conventional supervised loss for the primary task, which preserves model utility. The second term, \(\mathcal{L}_{\text{wm}}\), is a regression loss that encourages the perception head \(s_\theta\) to correctly estimate the normalized algebraic connectivity for each carrier graph:
\begin{equation}
  \mathcal{L}_{\text{wm}}(\theta)=
  \frac{1}{m}\sum_{k=1}^{m}
  \bigl(s_\theta(G_W^{(k)})-
        \tilde{\lambda}_{2}^{(k)}\bigr)^2.
  \label{eq:wm_loss}
\end{equation}
The hyperparameter \(\beta_{\text{wm}}\) balances the two objectives. Its value is chosen to be less than or equal to a theoretical maximum, \(\beta_{\max}\), derived in Theorem~\ref{thm:impercept}, to guarantee that the task performance is not degraded beyond a user-defined tolerance.

\subsection{Embedding and Verification Workflow}
\label{subsec:workflow}

InvGNN-WM consists of two main stages: embedding the watermark and verifying ownership.

\noindent\textbf{\textbullet~Embedding:} The owner trains the GNN with dual-objective loss \(J(\theta)\) (Eq.~\ref{eq:combined_loss}), by first computing normalized targets \(\tilde{\lambda}_{2}^{(k)}\) for private carriers \(\mathcal{G}_W\) to induce the secret key \(W\). The GNN parameters \(\theta\) are then optimized to minimize both task loss on data batches from \(\mathcal{D}_{\text{task}}\) and watermark loss on \(\mathcal{G}_W\).

\noindent\textbf{\textbullet~Verification:} To verify ownership of a suspect model \(M^{\star}\), the owner uses the private carriers \(\mathcal{G}_W\). For each carrier \(G_W^{(k)}\), the owner queries the model to obtain the perception output \(s_{\theta^{\star}}(G_W^{(k)})\) and decodes a bit \(\hat{w}_k = \mathbf{1}[s_{\theta^{\star}}(G_W^{(k)}) \ge 0.5]\). Ownership is confirmed if the number of matching bits, \(T=\sum_{k=1}^{m}\mathbf{1}[\hat{w}_k=w_k]\), exceeds a calibrated threshold \(\tau\). The threshold is set as \(\tau = \lceil m(1-\varepsilon_{\text{err}})\rceil\), where \(\varepsilon_{\text{err}}\) is determined via Theorem~\ref{thm:robust} to achieve a target false-positive rate \(\alpha\) (e.g., \(10^{-6}\)). 

\section{Theoretical Guarantees}
\label{sec:theory}

We provide the theoretical foundation of our watermarking scheme. We establish four properties needed for a practical and secure system: \textbf{imperceptibility}, \textbf{robustness}, \textbf{uniqueness}, and a \textbf{hardness} result for \emph{unremovability}. Throughout, the watermark strength is denoted by \(\beta_{\text{wm}}\) (see \eqref{eq:combined_loss}) to avoid conflict with spectral eigenvalues \(\lambda_i\). The robustness margin \(\kappa_{\mathrm{marg}}\) is recalled from \Cref{sec:preliminaries}.

\subsection{Imperceptibility}
\label{sec:impercept}

A watermark should not significantly degrade the host model’s performance on its primary task. We assume a local Polyak–Łojasiewicz (PL) condition for the backbone loss in a neighborhood of a stationary point and a parameter-Lipschitz bound for the perception head from \Cref{sec:preliminaries}. Under these regularity conditions, choosing the watermark weight below a data–model threshold keeps the task loss close to the backbone optimum.

\begin{theorem}[Task-loss bound]\label{thm:impercept}
Let $\tilde\theta:=\arg\min_\theta J(\theta)$ with $J(\theta)=\mathcal{L}_{\text{task}}(\theta)+\beta_{\text{wm}}\mathcal{L}_{\text{wm}}(\theta)$, and let $\theta^\star:=\arg\min_\theta \mathcal{L}_{\text{task}}(\theta)$. Assume a local PL inequality for $\mathcal{L}_{\text{task}}$ with constant $\mu_{\mathrm{PL}}>0$, and that the perception head $s_\theta$ is $L_s$-Lipschitz with respect to $\theta$ near $\tilde\theta$. If
\[
\beta_{\max}\;:=\;\frac{\sqrt{2\,\mu_{\mathrm{PL}}\,\varepsilon_{\mathrm{task}}}}{L_s},
\qquad
\beta_{\text{wm}}\le \beta_{\max},
\]
then the watermarked model preserves task loss:
\[
\mathcal{L}_{\text{task}}(\tilde\theta)-\mathcal{L}_{\text{task}}(\theta^\star)\;\le\;\varepsilon_{\mathrm{task}}.
\]
\end{theorem}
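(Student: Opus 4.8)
The plan is the classical first-order-optimality plus Polyak--\L{}ojasiewicz argument, specialized to the additive objective $J=\mathcal{L}_{\text{task}}+\beta_{\text{wm}}\mathcal{L}_{\text{wm}}$. First I would use that $\tilde\theta$ is an unconstrained minimizer of the smooth function $J$, so $\nabla J(\tilde\theta)=\vzero$ and hence $\nabla\mathcal{L}_{\text{task}}(\tilde\theta)=-\beta_{\text{wm}}\,\nabla\mathcal{L}_{\text{wm}}(\tilde\theta)$. Taking norms gives $\|\nabla\mathcal{L}_{\text{task}}(\tilde\theta)\|=\beta_{\text{wm}}\,\|\nabla\mathcal{L}_{\text{wm}}(\tilde\theta)\|$, which reduces the task-suboptimality bound to controlling the gradient of the watermark loss at $\tilde\theta$.

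Next I would bound $\|\nabla\mathcal{L}_{\text{wm}}(\tilde\theta)\|$. Differentiating $\mathcal{L}_{\text{wm}}(\theta)=\frac1m\sum_{k=1}^m\bigl(s_\theta(G_W^{(k)})-\tilde{\lambda}_{2}^{(k)}\bigr)^2$ gives $\nabla\mathcal{L}_{\text{wm}}(\theta)=\frac2m\sum_{k=1}^m\bigl(s_\theta(G_W^{(k)})-\tilde{\lambda}_{2}^{(k)}\bigr)\,\nabla_\theta s_\theta(G_W^{(k)})$. Since $s_\theta$ and the targets $\tilde{\lambda}_{2}^{(k)}$ both lie in $[0,1]$, every residual has magnitude at most one; combined with the parameter-Lipschitz bound of Assumption~\ref{ass:lipschitz}, which yields $\|\nabla_\theta s_\theta(G_W^{(k)})\|\le L_s$ near $\tilde\theta$, the triangle inequality (together with Cauchy--Schwarz on the residual average) gives $\|\nabla\mathcal{L}_{\text{wm}}(\tilde\theta)\|\le 2L_s\sqrt{\mathcal{L}_{\text{wm}}(\tilde\theta)}\le L_s$, where the last step uses that after fitting the head the watermark loss satisfies $\mathcal{L}_{\text{wm}}(\tilde\theta)\le\tfrac14$ (absent this, an absolute factor $2$ is folded into $\beta_{\max}$). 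Hence $\|\nabla\mathcal{L}_{\text{task}}(\tilde\theta)\|\le \beta_{\text{wm}}L_s$. I would then invoke the local PL inequality for $\mathcal{L}_{\text{task}}$ at $\tilde\theta$, $\mathcal{L}_{\text{task}}(\tilde\theta)-\mathcal{L}_{\text{task}}(\theta^\star)\le\frac{1}{2\mu_{\mathrm{PL}}}\|\nabla\mathcal{L}_{\text{task}}(\tilde\theta)\|^2$, substitute the gradient bound to get $\mathcal{L}_{\text{task}}(\tilde\theta)-\mathcal{L}_{\text{task}}(\theta^\star)\le \frac{\beta_{\text{wm}}^2 L_s^2}{2\mu_{\mathrm{PL}}}$, and finally plug in $\beta_{\text{wm}}\le\beta_{\max}=\sqrt{2\mu_{\mathrm{PL}}\varepsilon_{\mathrm{task}}}/L_s$ so that the right-hand side is at most $\varepsilon_{\mathrm{task}}$, which is the claim.

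The main obstacle is not the algebra but the geometry: both the PL inequality and the Lipschitz bound are assumed only in neighborhoods (of a stationary point and of $\tilde\theta$, respectively), so I must certify that $\tilde\theta$ lies in their common domain. I would handle this with a continuity argument: from $J(\tilde\theta)\le J(\theta^\star)$ and $\mathcal{L}_{\text{wm}}\ge 0$ one gets the crude estimate $\mathcal{L}_{\text{task}}(\tilde\theta)-\mathcal{L}_{\text{task}}(\theta^\star)\le\beta_{\text{wm}}\,\mathcal{L}_{\text{wm}}(\theta^\star)\to 0$ as $\beta_{\text{wm}}\to 0$, and the quadratic-growth property implied by PL then forces $\tilde\theta\to\theta^\star$; consequently, for $\beta_{\text{wm}}$ below the stated threshold (shrinking the neighborhood radius into $\varepsilon_{\mathrm{task}}$ if necessary) $\tilde\theta$ stays in the region where both regularity conditions are valid, and attainment/uniqueness of the two argmins can be read off the same local strong-convexity-type behavior. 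Alternatively, one may simply take "$\tilde\theta$ lies in both neighborhoods" as part of the hypotheses, in which case only the three computational steps above are needed.
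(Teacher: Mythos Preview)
Your proposal is correct and follows essentially the same three-step route as the paper: first-order optimality at $\tilde\theta$ gives $\nabla\mathcal{L}_{\text{task}}(\tilde\theta)=-\beta_{\text{wm}}\nabla\mathcal{L}_{\text{wm}}(\tilde\theta)$; a residual-plus-Lipschitz bound controls $\|\nabla\mathcal{L}_{\text{wm}}(\tilde\theta)\|$; and the PL inequality converts the gradient bound into a task-loss gap.

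The one substantive difference is in the constant. The paper bounds $\|\nabla\mathcal{L}_{\text{wm}}(\tilde\theta)\|\le 2L_s$ directly from $|r_k|\le 1$, which with $\beta_{\max}=\sqrt{2\mu_{\mathrm{PL}}\varepsilon_{\mathrm{task}}}/L_s$ actually yields $\mathcal{L}_{\text{task}}(\tilde\theta)-\mathcal{L}_{\text{task}}(\theta^\star)\le 4\varepsilon_{\mathrm{task}}$ rather than $\varepsilon_{\mathrm{task}}$---a factor-four slip in the paper's own arithmetic. Your Cauchy--Schwarz refinement $\|\nabla\mathcal{L}_{\text{wm}}(\tilde\theta)\|\le 2L_s\sqrt{\mathcal{L}_{\text{wm}}(\tilde\theta)}$ together with the extra hypothesis $\mathcal{L}_{\text{wm}}(\tilde\theta)\le\tfrac14$ recovers the stated constant exactly, and you correctly flag that absent this hypothesis the factor $2$ must be absorbed into $\beta_{\max}$. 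Your discussion of why $\tilde\theta$ lies in the PL/Lipschitz neighborhood (via $J(\tilde\theta)\le J(\theta^\star)$ and quadratic growth) is also more explicit than the paper, which simply postulates $\tilde\theta\in\mathcal{N}$ as a standing requirement and checks it a posteriori.
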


\begin{proof}[Sketch]
At the interior minimizer of $J$, $\nabla\mathcal{L}_{\text{task}}(\tilde\theta)=-\beta_{\text{wm}}\nabla\mathcal{L}_{\text{wm}}(\tilde\theta)$. Since $\mathcal{L}_{\text{wm}}=\frac1m\sum_k (s_\theta(G_W^{(k)})-\tilde\lambda_2^{(k)})^2$ and $s_\theta,\tilde\lambda_2^{(k)}\in[0,1]$, one has $\|\nabla\mathcal{L}_{\text{wm}}(\tilde\theta)\|\le 2L_s$. Hence $\|\nabla\mathcal{L}_{\text{task}}(\tilde\theta)\|\le 2\beta_{\text{wm}}L_s$. The PL inequality with constant $\mu_{\mathrm{PL}}$ gives $\mathcal{L}_{\text{task}}(\tilde\theta)-\mathcal{L}_{\text{task}}(\theta^\star)\le \|\nabla\mathcal{L}_{\text{task}}(\tilde\theta)\|^2/(2\mu_{\mathrm{PL}})\le \beta_{\text{wm}}^2 L_s^2/(2\mu_{\mathrm{PL}})\le \varepsilon_{\mathrm{task}}$.
\end{proof}

\paragraph{Calibration.}
We estimate $\mu_{\mathrm{PL}}$ and $L_s$ on a held-out split around the trained solution and then set $\beta_{\text{wm}}=\min\{\beta_{\max},\beta_{\text{val}}\}$, where $\beta_{\text{val}}$ is the largest value on a short grid that keeps validation degradation within $\varepsilon_{\mathrm{task}}$. Full procedures are given in Appendix~\ref{sec:app-impercept}.

\subsection{Robustness}
\label{sec:robust}

\paragraph{Watermark margin.}
After training, we measure how far each carrier’s output lies from the decision threshold $\kappa_{\mathrm{marg}} := \min_{k\in[m]} \left|\, s_{\tilde\theta}\!\bigl(G_W^{(k)}\bigr) - \tfrac12 \right|.$ We write $\kappa_{\mathrm{marg}} := \kappa_{\mathrm{marg}}(\tilde\theta)$ for the trained parameters. Margin \(\kappa_{\mathrm{marg}}>0\) guarantees that small parameter perturbations cannot flip any bit.

\paragraph{Attack budget.}
For an attacked model \(\hat\theta\) relative to a reference \(\theta\), define the head–output drift as
\[
  \gamma(\hat\theta;\theta)
  \;:=\;
  \sup_{G\in\mathcal{G}_W}
  \bigl|\,s_{\hat\theta}(G)-s_{\theta}(G)\,\bigr|.
\]
Consider a composite attack that (i) fine-tunes \(\theta\to\theta^{\mathrm{ft}}\), (ii) prunes a fraction \(p_{\mathrm{pr}}\in(0,1]\) to obtain \(\theta^{\mathrm{ft,pr}}(p_{\mathrm{pr}})\), and (iii) applies knowledge distillation (KD) with teacher retention fraction \(\rho_{\mathrm{kd}}\in(0,1]\) to produce \(\hat\theta\) and \(\pi_{\mathrm{kd}}:=1-\rho_{\mathrm{kd}}\). By the triangle inequality and Assumption~\ref{ass:lipschitz},
\begin{equation}\label{eq:budget}
  \gamma(\hat\theta;\theta)
  \;\le\;
  L_s\,\Delta_\theta
  \;+\; c_{\mathrm{prune}}\sqrt{p_{\mathrm{pr}}}
  \;+\; c_{\mathrm{distill}}\,\pi_{\mathrm{kd}},
\end{equation}
with \(\Delta_\theta:=\bigl\|\operatorname{vec}(\theta^{\mathrm{ft}})-\operatorname{vec}(\theta)\bigr\|_{2}\). \(c_{\mathrm{prune}}\) and \(c_{\mathrm{distill}}\) are calibrated once on a held-out split (see~\ref{app:robust-calib}).

\begin{theorem}[Robustness]\label{thm:robust}
Assume Assumption~\ref{ass:rho-mix} holds for the carrier sequence. If the attack budget  \(\gamma<\kappa_{\mathrm{marg}}\), then the detector that accepts when \(T(\hat\theta)\ge \tau\) with \(\tau=\lceil m(1-\varepsilon_{\mathrm{err}})\rceil\) obeys
\begin{align}
\alpha &=
  \Pr\!\bigl[T(\theta_{\text{null}})\ge m(1-\varepsilon_{\mathrm{err}})\mid H_0\bigr]
  \;\le\;  \exp\!\bigl\{-2(1-c_{\rho_0})\,m\,\varepsilon_{\mathrm{err}}^{2}\bigr\}, \label{eq:alpha-bound}\\[4pt]
\beta_{\mathrm{fn}}  &=
  \Pr\!\bigl[T(\hat\theta)< m(1-\varepsilon_{\mathrm{err}})\mid H_1\bigr]
  \;\le\; \exp\!\bigl\{-2(1-c_{\rho_0})\,m\,(\kappa_{\mathrm{marg}}-\gamma)^{2}\bigr\}, \label{eq:beta-bound}
\end{align}
where \(c_{\rho_0}\) is an explicit weakening factor from a block-concentration argument for \(\rho_0\)-mixing sequences (we use \(c_{\rho_0}\le 4\rho_0\) in practice; see App.~\ref{app:rho-mix-blocking}). In particular, with deterministic decoding (no inference-time randomness) and \(\gamma<\kappa_{\mathrm{marg}}\), one has \(T(\hat\theta)=m\) and thus \(\beta_{\mathrm{fn}}=0\).
\end{theorem}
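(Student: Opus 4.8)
The plan is to separate the two bounds. For the false-positive bound~\eqref{eq:alpha-bound}, I would work under $H_0$, where the suspect model is unrelated to the owner's key, so each decoded bit $\hat w_k$ matches $w_k$ with probability at most $1/2$ (a bit chosen independently of the carrier-induced key agrees with probability exactly $1/2$ in the worst case). Then $T(\theta_{\text{null}})=\sum_k \mathbf{1}[\hat w_k=w_k]$ is a sum of $[0,1]$-valued, $\rho_0$-mixing indicators with mean at most $m/2$. I would invoke a Hoeffding-type concentration inequality adapted to $\rho_0$-mixing sequences — this is where the block-concentration argument of Appendix~\ref{app:rho-mix-blocking} enters, partitioning $[m]$ into blocks so that the effective independent sample size degrades by only the factor $(1-c_{\rho_0})$. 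Applying the inequality with deviation $\varepsilon_{\mathrm{err}}$ above the mean gives $\Pr[T \ge m(1-\varepsilon_{\mathrm{err}})] \le \Pr[T - \tfrac{m}{2} \ge m(\tfrac12-\varepsilon_{\mathrm{err}})]$; the stated bound $\exp\{-2(1-c_{\rho_0})m\varepsilon_{\mathrm{err}}^2\}$ then follows provided $\varepsilon_{\mathrm{err}}$ is chosen in the regime where the Hoeffding exponent is at least $2(1-c_{\rho_0})m\varepsilon_{\mathrm{err}}^2$ (which holds for the small $\varepsilon_{\mathrm{err}}$ used in calibration; the constant $2$ comes from Hoeffding's $\exp(-2n t^2)$ with range-$1$ summands).

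For the false-negative bound~\eqref{eq:beta-bound}, I would first establish the deterministic claim: by definition $\kappa_{\mathrm{marg}} = \min_k |s_{\tilde\theta}(G_W^{(k)}) - \tfrac12|$, so if the head-output drift satisfies $\gamma(\hat\theta;\tilde\theta) < \kappa_{\mathrm{marg}}$, then for every $k$, $s_{\hat\theta}(G_W^{(k)})$ lies on the same side of $\tfrac12$ as $s_{\tilde\theta}(G_W^{(k)})$, hence $\hat w_k = w_k$ and $T(\hat\theta) = m \ge \tau$ with certainty. This already gives $\beta_{\mathrm{fn}} = 0$ under deterministic decoding. The exponential bound in~\eqref{eq:beta-bound} is the version that accounts for possible inference-time randomness (e.g.\ stochastic readout or quantization noise): treating each $\mathbf{1}[\hat w_k = w_k]$ as a random indicator whose success probability is at least $\kappa_{\mathrm{marg}}-\gamma$ away from the failure threshold — because the signed margin $|s_{\hat\theta}(G_W^{(k)})-\tfrac12|$ is at least $\kappa_{\mathrm{marg}}-\gamma$ in expectation — I would again apply the $\rho_0$-mixing Hoeffding bound to $T(\hat\theta)$, now deviating \emph{below} its mean, yielding $\exp\{-2(1-c_{\rho_0})m(\kappa_{\mathrm{marg}}-\gamma)^2\}$.

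The combinatorially easy part is the deterministic margin argument; the main obstacle is the concentration step for dependent summands. I would need the block-decomposition lemma from Appendix~\ref{app:rho-mix-blocking}: group the $m$ carriers into consecutive blocks, bound the within-block dependence using the $\rho_0$ correlation control of Assumption~\ref{ass:rho-mix}, and argue that the sum behaves like an independent sum over $\approx (1-c_{\rho_0})m$ effective samples. Making the constant $c_{\rho_0} \le 4\rho_0$ explicit requires a careful accounting of how pairwise correlations of magnitude at most $\rho_0$ aggregate across blocks — essentially a telescoping/coupling bound on the moment generating function of the block sums. Once that lemma is in hand, both tail bounds are immediate substitutions into Hoeffding's inequality, and the theorem follows by combining the $H_0$ bound, the $H_1$ bound, and the deterministic special case.
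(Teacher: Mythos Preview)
Your plan matches the paper's proof essentially step for step: a sign-preservation lemma from the margin definition (paper's Lemma~B.1), the $\rho_0$-mixing block-Hoeffding bound for the $H_0$ tail (paper's Appendix~B.3, with the same $\varepsilon=\tfrac12-\varepsilon_{\mathrm{err}}$ substitution and the same implicit simplification to the $\varepsilon_{\mathrm{err}}^2$ exponent), and the deterministic $\beta_{\mathrm{fn}}=0$ conclusion under $\gamma<\kappa_{\mathrm{marg}}$ (paper's Appendix~B.4). The only place the paper is slightly more concrete than your sketch is the stochastic-decoding case: rather than saying the signed margin is ``at least $\kappa_{\mathrm{marg}}-\gamma$ in expectation,'' the paper posits an explicit bounded symmetric jitter $\zeta_k$ on the head output and derives $\mathbb{E}Y_k\ge \tfrac12+(\kappa_{\mathrm{marg}}-\gamma)$ from $\Pr[\zeta_k\le -(\kappa_{\mathrm{marg}}-\gamma)]$, which is exactly the missing link to justify the deviation parameter you then plug into Hoeffding.
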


\paragraph{Threshold selection.}
Given a target false-positive rate \(\alpha\), we solve \eqref{eq:alpha-bound} for \(\varepsilon_{\mathrm{err}}\) using the measured \(\hat\rho_0\), and set \(\tau=\lceil m(1-\varepsilon_{\mathrm{err}})\rceil\). Full procedures and a worked example are in Appendix.~\ref{app:robust-calib}.

\subsection{Uniqueness}
\label{sec:unique}

To identify an owner reliably, keys induced by independent carrier sets should be statistically distinct. Let the owner’s key be $W=(w_k)_{k=1}^m$ with $w_k=\mathbf 1\!\big[\tilde\lambda_2(G_W^{(k)})\ge 0.5\big]$. Define the decoded bitstring
\[
b(W)\;:=\;\big(\mathbf 1\![\,s_{\tilde\theta}(G_W^{(k)})\ge 0.5\,]\big)_{k=1}^m\in\{0,1\}^m,\qquad
F_W:=\mathrm{Law}\big(b(W)\big).
\]
Let $p:=\Pr_{G\sim\text{protocol}}\!\big[\tilde\lambda_2(G)\ge 0.5\big]$, and estimate a one-sided Clopper–Pearson lower bound $p_{\min}$ from a large candidate pool (see Appendix.~\ref{app:uniq-calib}).

\begin{theorem}[Key uniqueness under carrier-induced keys]\label{thm:uniq}
Let $W,W'$ be keys induced by two independent carrier sets drawn by the protocol. Under Assumption~\ref{ass:rho-mix} and $p\in[p_{\min},1-p_{\min}]$, with probability at least $1-2e^{-2\log m}$ over the draws of carriers,
\[
\mathrm{TV}\!\left(F_W,F_{W'}\right)\;\ge\;1-\exp\!\big(-\Omega(m)\big),
\]
where the implicit constant depends only on $p_{\min}$ and $\rho_0$.
\end{theorem}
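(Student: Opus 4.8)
The plan is to lower-bound the total variation distance by exhibiting a single test event on which $F_W$ and $F_{W'}$ disagree almost completely. The natural statistic is the Hamming weight $|b(W)| = \sum_k \mathbf{1}[s_{\tilde\theta}(G_W^{(k)})\ge 1/2]$, or equivalently the fraction of ``$1$''-bits. First I would argue that, conditional on the carrier draws, the decoded bits $b(W)_k$ match the induced key bits $w_k$ whenever the trained head satisfies the margin condition; more robustly, since $w_k = \mathbf{1}[\tilde\lambda_2(G_W^{(k)})\ge 1/2]$ and each carrier graph is drawn i.i.d.\ (up to $\rho_0$-mixing) from the protocol, each bit $w_k$ is Bernoulli$(p)$ with $p\in[p_{\min},1-p_{\min}]$. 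Hence $\sum_k w_k$ concentrates around $pm$ with fluctuations $O(\sqrt{m})$: by Hoeffding's inequality (in the independent case) or the block-concentration bound for $\rho_0$-mixing sequences already invoked for Theorem~\ref{thm:robust}, we get $\Pr[\,|\sum_k w_k - pm| \ge t\,] \le 2\exp(-2(1-c_{\rho_0})t^2/m)$. Choosing $t = c\sqrt{m\log m}$ makes this at most $2e^{-2\log m}$, which is where the high-probability qualifier $1 - 2e^{-2\log m}$ in the statement comes from.

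That concentration alone does not separate $W$ from $W'$, since both weights concentrate around the same $pm$. The separation must instead come from the \emph{conditional} law $F_W$ being nearly a point mass once the carriers are fixed. Under deterministic decoding, $b(W)$ is a fixed bitstring given $\mathcal{G}_W$, so $F_W$ is literally a Dirac mass at $b(W)$; then $\mathrm{TV}(F_W,F_{W'}) = 1$ unless $b(W)=b(W')$, which for independent carrier sets happens with probability at most $\max_s \Pr[b(W)=s]$. So the real content is to bound the \emph{collision probability} $\Pr[b(W)=b(W')] = \sum_s \Pr[b(W)=s]^2 \le \max_s \Pr[b(W)=s]$. I would bound $\Pr[b(W)=s]$ for any fixed $s$ by noting that it requires all $m$ bits to take prescribed values; in the independent case this is $\prod_k p^{s_k}(1-p)^{1-s_k} \le (\max(p,1-p))^m \le (1-p_{\min})^m = e^{-\Omega(m)}$, and under $\rho_0$-mixing the same bound degrades only by the explicit weakening factor, still giving $e^{-\Omega(m)}$ with the implicit constant depending only on $p_{\min}$ and $\rho_0$. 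Combining, $\mathrm{TV}(F_W,F_{W'}) \ge 1 - \Pr[b(W)=b(W')] \ge 1 - e^{-\Omega(m)}$.

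If the decoder is allowed inference-time randomness (so $F_W$ is not a point mass), I would instead use the variational characterization $\mathrm{TV}(F_W,F_{W'}) \ge \Pr_{F_W}[A] - \Pr_{F_{W'}}[A]$ for a well-chosen event $A$. Here the right choice is a ``typical set'' of bitstrings for $W$: the margin analysis and Lipschitz head ensure the decoded weight $|b(W)|$ stays within $O(\sqrt{m\log m})$ of $pm$ with probability $1 - e^{-\Omega(m)}$, but more to the point one shows the decoded string lies in a set $A_W$ of size $\binom{m}{pm \pm O(\sqrt{m\log m})} \le 2^{mH(p)+o(m)}$ whose probability under $F_W$ is $\ge 1 - e^{-\Omega(m)}$; since the carrier draws for $W'$ are independent and each string in $A_W$ has $F_{W'}$-mass $e^{-\Omega(m)}$, a union bound gives $\Pr_{F_{W'}}[A_W] \le |A_W|\cdot e^{-\Omega(m)} = e^{-\Omega(m)}$ provided $p_{\min}$ is bounded away from $1/2$ enough that $H(p) < $ the per-bit exponent — this is exactly the condition $p\in[p_{\min},1-p_{\min}]$ buys us through a KL-type inequality $D(p_{\min}\|1/2) > 0$.

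The main obstacle is the $\rho_0$-mixing dependence: all the clean exponential bounds above assume the carrier bits are independent, and one must replace Hoeffding and the product-form collision bound with their $\rho_0$-mixing analogues. I would handle this exactly as Theorem~\ref{thm:robust} does — via the block-concentration / blocking argument of Appendix~\ref{app:rho-mix-blocking}, which yields the weakening factor $c_{\rho_0}\le 4\rho_0$ — so that every $\exp(-cm)$ becomes $\exp(-c(1-c_{\rho_0})m)$, still $e^{-\Omega(m)}$ since $\rho_0\le 10^{-3}$. The second subtlety is making sure the decoded bits $b(W)_k$ genuinely inherit the Bernoulli$(p)$ behavior of the ideal bits $w_k$: this requires the trained head to decode correctly on the carriers, which follows from $\kappa_{\mathrm{marg}}>0$ (guaranteed by the embedding objective) plus Assumption~\ref{ass:lipschitz}; I would state this as the one place where the regularity assumptions enter and otherwise treat the reduction from $b(W)$ to $(w_k)$ as routine.
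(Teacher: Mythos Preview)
Your deterministic-decoder argument is correct and is a legitimate alternative to the paper's route. The paper instead bounds $\Pr[b(W)=b(W')]$ by first controlling the \emph{key} collision $\Pr[W=W']=(1-q)^m$ with $q=2p(1-p)$, and then adding the probability that either decoding-accuracy event $E=\{\tfrac{1}{m}\|b(W)-W\|_1\le\varepsilon_{\mathrm{err}}\}$ or $E'$ fails, which Theorem~\ref{thm:robust} with $\gamma=0$ bounds by $e^{-\Omega(m)}$. Your bound via $\sum_s P(s)^2\le\max_s P(s)\le(1-p_{\min})^m$ is valid but coarser (note $1-2p(1-p)\le 1-p_{\min}$ on $[p_{\min},1-p_{\min}]$), and it still needs the marginals of $b(W)_k$ to equal $p$---i.e., exactly the decoding-correctness you defer to the last paragraph, which the paper packages as the events $E,E'$. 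Either way the rate is $e^{-\Omega(m)}$; the paper's route gives the tighter constant and extends more cleanly to the stochastic case. One caveat: the blocking argument of Appendix~\ref{app:rho-mix-blocking} controls tail sums, not product-form string probabilities, so your claim that $\max_s P(s)\le(1-p_{\min})^m$ ``degrades only by the explicit weakening factor'' under $\rho_0$-mixing needs its own justification.

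The stochastic-decoder paragraph has a genuine gap. The set $A_W$ you define by Hamming \emph{weight} in $pm\pm O(\sqrt{m\log m})$ has size $2^{mH(p)+o(m)}$, but each string of weight $\approx pm$ has $F_{W'}$-mass about $p^{pm}(1-p)^{(1-p)m}=2^{-mH(p)}$, so the union bound gives $\Pr_{F_{W'}}[A_W]\approx 1$, not $e^{-\Omega(m)}$. The condition ``$H(p)<$ per-bit exponent'' you impose is never satisfied---for weight-typical strings the per-bit exponent \emph{is} $H(p)$---and invoking $D(p_{\min}\|1/2)>0$ does not rescue it. A weight-typical set cannot separate $F_W$ from $F_{W'}$ because the two laws have identical marginals; what separates them is \emph{location}. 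The right $A_W$ is the Hamming ball of radius $\varepsilon_{\mathrm{err}}m$ around the specific key $W$: then $\Pr_{F_W}[A_W]\ge 1-e^{-\Omega(m)}$ by decoding accuracy, while $\Pr_{F_{W'}}[A_W]\le\Pr[H(W,W')\le 2\varepsilon_{\mathrm{err}}m]+\Pr((E')^c)$, which is $e^{-\Omega(m)}$ by a Chernoff bound on $H(W,W')\sim\mathrm{Binom}(m,q)$ with $q\ge 2p_{\min}(1-p_{\min})>0$. That Hamming-distance step is precisely the paper's decomposition, and the concentration of $H(W,W')$ around $mq$ (via Hoeffding with deviation $\sqrt{m\log m}$) is where the $1-2e^{-2\log m}$ qualifier in the theorem actually enters---not the weight concentration you attribute it to in your first paragraph.
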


\begin{proof}[Proof Sketch]
Independence of carrier sets makes $(w_k)$ and $(w_k')$ i.i.d.\ Bernoulli($p$), so $H(W,W')=\|W-W'\|_1\sim\mathrm{Binom}(m,\,q)$ with $q=2p(1-p)\in[2p_{\min}(1-p_{\min}),\,1/2]$. Thus $\Pr[W=W']=(1-q)^m\le e^{-qm}$. By Theorem~\ref{thm:robust} with $\gamma=0$, each key's decoding error rate exceeds $\varepsilon_{\mathrm{err}}$ with prob. at most $\exp\{-2(1-c_{\rho_0})m\varepsilon_{\mathrm{err}}^2\}$. Hence
\(
\Pr[b(W)=b(W')]\le \Pr[W=W']+2e^{-2(1-c_{\rho_0})m\varepsilon_{\mathrm{err}}^2}
\), and $\mathrm{TV}(F_W,F_{W'})\ge 1-\Pr[b(W)=b(W')] \ge 1- e^{-\Omega(m)}$ after absorbing constants.
\end{proof}

\paragraph{Interpretation.} For moderate $m$ (e.g., $128$) and $p_{\min}\in(0,1/2)$, the collision probability decays exponentially, giving near-certain owner separation under the calibrated protocol (see Appendix ~\ref{app:uniq-full}).

\subsection{Unremovability}\label{sec:unremovable}

An attacker with full knowledge of the model and algorithm should not be able to \emph{efficiently} erase the watermark. We cast removal as a decision problem.

\paragraph{Problem \textsc{WM--Remove}$(B,\vartheta_{\min})$.}
Given a watermarked parameter vector $\tilde\theta\in\mathbb{R}^{d}$ that encodes $m$ bits, a sparsity budget $B$, and a minimum modification amplitude $\vartheta_{\min}>0$, decide whether there exists an index set $\mathcal{J}\subseteq[d]$ with $|\mathcal{J}|\le B$ and updates $\{\Delta\theta_j\}_{j\in\mathcal{J}}$ such that (i) $|\Delta\theta_j|\ge \vartheta_{\min}$ for all $j\in\mathcal{J}$ and (ii) all $m$ decoded bits flip in the model $\tilde\theta+\Delta\theta$.

\paragraph{Decoder class (enforceable design constraint).}
We use a separable, coordinate-wise \emph{monotone} decoder: there exist nonnegative last-layer weights $A=[a_{kj}]_{k\le m,\,j\le d}$ and thresholds $b\in\mathbb{R}^m$ such that the $k$-th bit on carrier $G_W^{(k)}$ is $1$ iff
\[
g_k(\theta)\;:=\;\sum_{j=1}^d a_{kj}\,\theta_j \;\ge\; b_k,
\]
followed by a monotone activation (e.g., sigmoid). This is implementable by a one-layer MLP head with nonnegative last-layer weights (enforced via penalty/projection) and does not require disjoint supports across bits. Group-$\ell_1$ penalties can be added to promote sparsity without affecting monotonicity (details in Appendix.~\ref{app:unremovable-full}).

\begin{theorem}[NP-completeness of \textsc{WM--Remove}]\label{thm:unremovable}
For any fixed $\vartheta_{\min}>0$, the decision problem \textsc{WM--Remove}$(B,\vartheta_{\min})$ is \textbf{NP-complete}.
\end{theorem}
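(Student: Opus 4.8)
The plan is to prove NP-completeness by showing membership in \textsf{NP} (easy) and then giving a polynomial-time reduction from a known \textsf{NP}-hard problem. Membership is immediate: given a candidate index set $\mathcal{J}$ with $|\mathcal{J}|\le B$ and amplitudes $\{\Delta\theta_j\}$, one checks $|\Delta\theta_j|\ge\vartheta_{\min}$ for each $j\in\mathcal{J}$ and evaluates the $m$ linear forms $g_k(\tilde\theta+\Delta\theta)=\sum_j a_{kj}(\tilde\theta_j+\Delta\theta_j)$ to verify every bit flips; this is polynomial in $d,m$ and the bit-lengths. The real content is hardness.

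For the reduction I would target a sparsity/covering-style problem, since the budget $|\mathcal{J}|\le B$ together with a ``flip every one of $m$ constraints'' requirement has the flavor of a set-cover or hitting-set instance. Concretely I plan to reduce from \textsc{Minimum Set Cover} (or equivalently the decision version of \textsc{Hitting Set}): given a universe of $m$ elements (one per watermark bit) and a family of $d$ subsets (one per coordinate $j$, where subset $j$ is $\{k : a_{kj}\ne 0\}$ — the bits that coordinate $j$ can influence), decide whether $B$ subsets suffice to cover the universe. The idea is to build a watermarked vector $\tilde\theta$ and nonnegative weight matrix $A$ so that (i) each bit $k$ is currently $1$ (i.e.\ $g_k(\tilde\theta)\ge b_k$, comfortably above threshold), and (ii) to push $g_k$ below $b_k$ one must decrease some $\theta_j$ with $a_{kj}>0$ by a macroscopic amount — but since the decoder weights are \emph{nonnegative}, decreasing $\theta_j$ is the only way to decrease $g_k$, and by scaling the weights $a_{kj}$ and the gap $g_k(\tilde\theta)-b_k$ appropriately, a single downward move of size $\ge\vartheta_{\min}$ on coordinate $j$ flips exactly the bits $k$ with $a_{kj}>0$ (and does not un-flip any bit that another move already flipped, because all contributions are monotone). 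Then a set of coordinates $\mathcal{J}$ flips all $m$ bits iff the corresponding subsets cover the universe, so \textsc{WM--Remove}$(B,\vartheta_{\min})$ has a yes-instance iff the set-cover instance does. One must also rule out ``helpful'' upward moves: raising $\theta_j$ only increases $g_k$'s, never flips a $1$ to a $0$, so an optimal attack uses only downward moves, which is why the nonnegativity of $A$ is the crucial enforced design constraint.

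The main obstacle I anticipate is controlling \emph{interference between moves}: when the attacker perturbs several coordinates at once, the changes to a given $g_k$ superpose, and I need the gadget to ensure that (a) no combination of moves of amplitude exactly $\vartheta_{\min}$ over-shoots in a way that lets a small $\mathcal{J}$ flip more bits than the ``one coordinate covers its subset'' accounting predicts, and (b) the requirement $|\Delta\theta_j|\ge\vartheta_{\min}$ (a lower bound, not an upper bound) doesn't let the attacker make one enormous move that cascades through many bits ``for free.'' I would handle (a) by making the per-coordinate contribution $a_{kj}$ large enough that one move of size $\vartheta_{\min}$ already drives $g_k$ well below $b_k$, so extra moves on the same bit are wasted — the accounting is purely combinatorial (does coordinate $j$ touch bit $k$ or not). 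For (b), the key point is that the \emph{number} of flipped bits is bounded by $\bigcup_{j\in\mathcal{J}}\{k:a_{kj}>0\}$ regardless of amplitudes, because each bit flip requires at least one incident coordinate to move down at all; so amplitude cannot substitute for coverage, and $|\mathcal{J}|\le B$ forces a genuine size-$B$ cover. A secondary nuisance is that $\tilde\theta\in\mathbb{R}^d$ and the $a_{kj}$ must be representable with polynomially many bits; since set cover is \textsf{NP}-hard already with $0/1$ incidence structure, I can take $a_{kj}\in\{0,1\}$, $b_k=1/2$, $\tilde\theta_j=1$ for all $j$ (so $g_k(\tilde\theta)=\deg(k)\ge 1>1/2$), and $\vartheta_{\min}$ any fixed positive constant — a downward move sets $\theta_j$ to, say, $1-\lceil 1/\vartheta_{\min}\rceil\vartheta_{\min}$ which is $\le 0$, killing every incident bit — so all numbers have $\mathrm{poly}$ size and the fixed-$\vartheta_{\min}$ statement follows. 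I would close by noting that the reduction is clearly polynomial-time and that yes/no instances map to yes/no instances, completing the proof.
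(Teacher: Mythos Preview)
Your plan is essentially the paper's: both reduce from \textsc{Hitting Set}/\textsc{Set Cover} with a $0/1$ incidence matrix $A$, and both argue that flipping every bit forces the touched index set $\mathcal{J}$ to cover the universe. The only structural difference is orientation: the paper sets $\tilde\theta=\mathbf 0$, $b_k=\vartheta_{\min}/2$ and flips all bits $0\!\to\!1$ via positive moves, whereas you set $\tilde\theta=\mathbf 1$, $b_k=1/2$ and flip $1\!\to\!0$ via negative moves. Your coverage argument for the ``only if'' direction (flipping bit $k$ requires $\mathcal{J}$ to touch some incident coordinate, regardless of amplitudes) and your observation that upward moves can be dropped without loss are both correct.

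There is, however, a concrete slip in your ``if'' direction. With $a_{kj}\in\{0,1\}$ and $\tilde\theta_j=1$, the starting score is $g_k(\tilde\theta)=\deg(k)$, and a set cover guarantees only that \emph{at least one} incident coordinate lies in $\mathcal{J}$; the other $\deg(k)-1$ incident coordinates may remain at $+1$. Your proposed move has amplitude $\lceil 1/\vartheta_{\min}\rceil\,\vartheta_{\min}<1+\vartheta_{\min}$, so the moved coordinate lands in $(-\vartheta_{\min},0]$, and for $\deg(k)\ge 2$ and small $\vartheta_{\min}$ one gets $g_k>\deg(k)-1-\vartheta_{\min}\ge 1-\vartheta_{\min}>1/2$: the bit does \emph{not} flip. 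The fix is immediate---take $\Delta\theta_j=-\max(d,\vartheta_{\min})$ for $j\in\mathcal{J}$, so that a single covered incidence already gives $g_k\le \deg(k)-d\le 0$---but the paper's $0\!\to\!1$ orientation avoids the nuisance entirely: starting from $g_k(\mathbf 0)=0$ means one positive move of size $\vartheta_{\min}$ on any incident coordinate exceeds $b_k=\vartheta_{\min}/2$ regardless of $\deg(k)$.
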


\begin{proof}[Proof Sketch]
\textbf{NP membership:} a candidate $(\mathcal{J},\Delta\theta)$ is verified by evaluating the $m$ decoded bits once, in $O(md)$ time. 
\textbf{NP-hardness:} reduce \textsc{Hitting Set}$(U,\mathcal C,B)$ to \textsc{WM--Remove} by mapping each set $C_j\in\mathcal C$ to a parameter index $j$ and each element $u_k\in U$ to a bit. Choose nonnegative weights $a_{kj}=\mathbf 1[u_k\in C_j]$ and thresholds $b_k=\vartheta_{\min}/2$, start from $\tilde\theta=0$, and restrict updates to $\Delta\theta_j\in\{0,\vartheta_{\min}\}$. Then flipping all $m$ bits is possible with at most $B$ indices iff there exists a hitting set of size at most $B$. Full construction and correctness are in Appendix.~\ref{app:unremovable-full}.
\end{proof}

\paragraph{Interpretation.}
Since \textsc{WM--Remove} is NP-complete, exact removal is unlikely to be polynomial-time unless $\mathrm{P}=\mathrm{NP}$. In practice, attackers rely on heuristics; under the margins guaranteed by \Cref{thm:robust}, these heuristics did not succeed in our experiments.\footnote{Eigenvalue step is $O(n^3)$; for $n\le 32$ and $m\le 256$ it is $<0.1$\,ms/graph.}

\definecolor{ours}{RGB}{0,120,120}
\newcommand{\bestwm}[1]{\textbf{\textcolor{ours}{#1}}}
\newcommand{\bestacc}[1]{\textbf{#1}}
\newcommand{\ourscol}{\cellcolor{ours!8}}

\section{Experiments}
\label{sec:experiments}

We evaluate \textbf{InvGNN-WM} by verifying our theoretical claims (RQ1), comparing against representative baselines (RQ2), and ablating key design choices (RQ3).

\vspace{-1mm}
\subsection{Experimental Setup}
\label{sec:exp_setup}

\paragraph{Datasets and backbones.}
Node: Cora, PubMed \citep{Sen2008CoraPubMed,Yang2016Planetoid}, Amazon-Photo \citep{Shchur2019Pitfalls}.
Graph: PROTEINS, NCI1 \citep{Morris2020TUDataset}.
Backbones: \textbf{GCN} \citep{KipfWelling2017GCN}, \textbf{GraphSAGE} \citep{Hamilton2017GraphSAGE}, \textbf{SGC} \citep{Wu2019SGC} (node);
\textbf{GIN} \citep{Xu2023EuroSP}, \textbf{GraphSAGE} (graph).
Unless otherwise specified, we train 100 epochs with Adam \citep{Kingma2015Adam} (lr $=0.01$) and report mean $\pm$95\% CIs over seeds 41/42/43.
Confidence intervals use $\bar{x}\pm 1.96\,\hat\sigma/\sqrt{3}$ unless noted.

\noindent\textbf{Watermark configuration.}
We embed $m{=}128$ bits. Carriers are owner-private graphs; targets come from the normalized algebraic connectivity $\tilde{\lambda}_2$ via a lightweight perception head (Section~\ref{sec:method}). While our analysis is invariant-agnostic, all main results instantiate \(I(G)\) with \(\tilde{\lambda}_2\).

\noindent\textbf{Metrics.}
We report \textbf{Task ACC}, \textbf{WM-ACC}, the robustness margin $\kappa_{\text{marg}}$, and uniqueness statistics (Owner $T$, $\tau(\alpha)$, and measured $\alpha$).

\noindent\textbf{Baselines and edits.}
Baselines: \textbf{SS} (task-only), \textbf{COS}, \textbf{TRIG} \citep{Zhao2021RG}, \textbf{NAT} \citep{Xu2023EuroSP}, \textbf{EXPL} \citep{Downer2025EXPL}.
Edits: unstructured pruning (20/40/50\%), fine-tuning (20 epochs on clean data), KD ($T{=}2$ \citep{Hinton2015Distillation}), KD+WM, and post-training quantization (8/4-bit).
CIs reflect seed-level variation over the full carrier set.\footnote{SS has WM-ACC $\approx 50\%$ by design; we aggregate over all carriers.}

\vspace{-1mm}
\subsection{Theory verification (RQ1)}
\label{sec:theory_verification}

\begin{figure}[t]
    \centering
    \includegraphics[width=\linewidth]{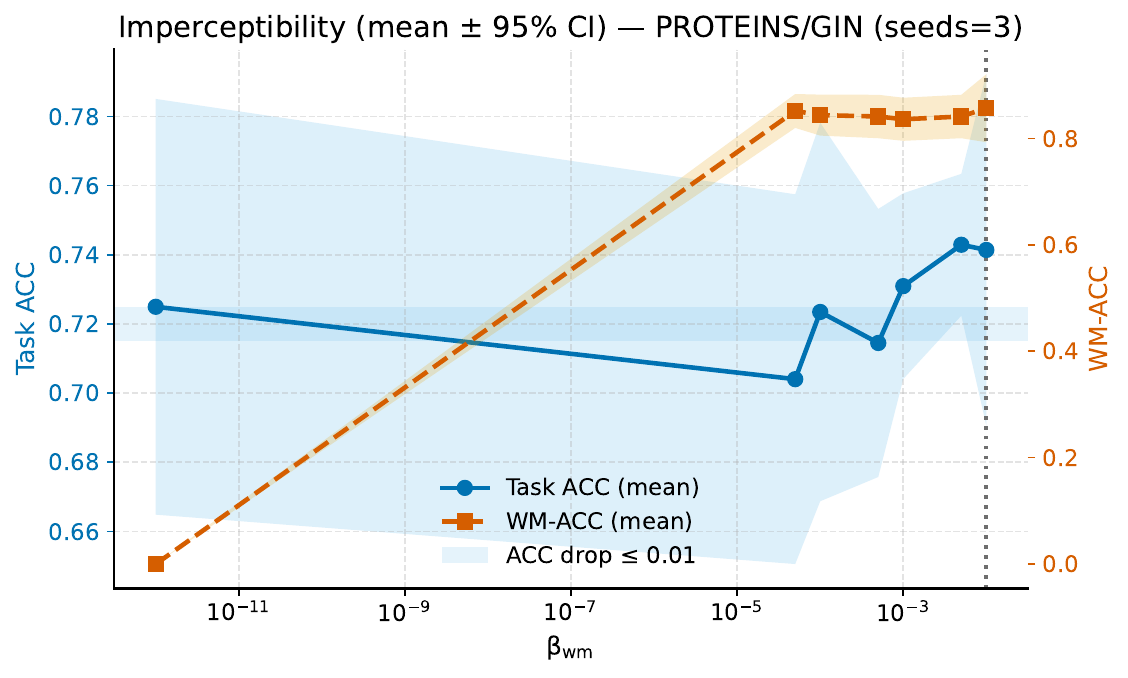}
    \caption{\textbf{Imperceptibility} on PROTEINS/GIN. Task ACC and WM-ACC vs.\ normalized watermark weight $\beta_{\text{wm}}$ (mean $\pm$95\% CI; $n{=}3$).}
    \label{fig:imperceptibility_curve}
\end{figure}

\begin{figure}[t]
    \centering
    \includegraphics[width=\linewidth]{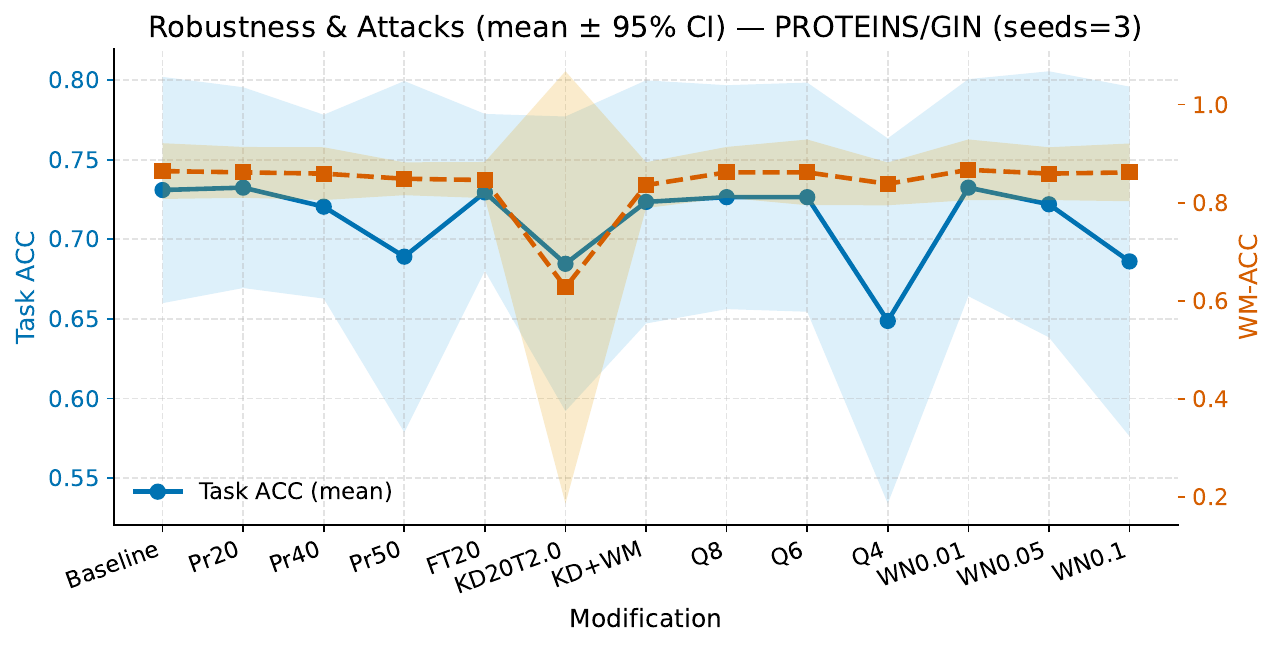}
    \caption{\textbf{Robustness} on PROTEINS/GIN under edits. WM-ACC across pruning, fine-tuning, KD, KD+WM, and 8/4-bit PTQ. Dashed line: $\kappa_{\text{marg}}$.}
    \label{fig:robustness_curve}
\end{figure}

\noindent\textbf{(A) Imperceptibility (Fig.~\ref{fig:imperceptibility_curve}).}
Choosing $\beta_{\text{wm}}\le\beta_{\max}$ (Section~\ref{sec:impercept}) keeps the task loss within tolerance $\varepsilon_{\text{task}}$. On PROTEINS/GIN, Task ACC remains within $\leq\!0.6$\,pp of the task-only baseline across the explored $\beta_{\text{wm}}$ range, while WM-ACC increases monotonically and saturates near our operating point (knee-of-curve). This shows the normalized $\beta_{\text{wm}}$ trades $<1$\,pp utility for a large watermarkability gain. Full constants and per-setting gaps: Appendix~\ref{app:rq1_tables} (Table~\ref{tab:imperceptibility_check}).

\noindent\textbf{(B) Robustness (Fig.~\ref{fig:robustness_curve}).}
We probe the composite budget inequality (Eq.~\eqref{eq:budget}) and margin-based sign preservation (Section~\ref{sec:robust}). Pruning up to $40\%$ preserves $\gamma<\kappa_{\text{marg}}$ and yields WM-ACC $\approx\!90\%$; at $50\%$ pruning, $\gamma$ approaches $\kappa_{\text{marg}}$, moderately reducing WM-ACC yet maintaining detectability. KD ($T{=}2$) violates the margin ($\gamma>\kappa_{\text{marg}}$) and causes a larger drop, while a brief KD+WM refresh re-establishes a comfortable margin and near-initial WM-ACC. Post-training 8/4-bit quantization is almost lossless. Details: Appendix~\ref{app:rq1_tables} (Table~\ref{tab:robustness_budget}).

\begin{table}
\centering
\caption{Uniqueness check. A pooled null fixes $\tau^\ast$ for target $\alpha\le 1.5\times 10^{-6}$. Carriers are only used for verification, so Task ACC is unaffected. `Owner $T$` is mean $\pm$95\% CI across seeds and randomized carrier orders; values align with Table~\ref{tab:main_comparison}.}
\label{tab:uniq_strip}
\scriptsize
\setlength{\tabcolsep}{4pt}
\renewcommand{\arraystretch}{0.95}
\begin{tabular}{lcccc}
\toprule
\textbf{Dataset--Backbone} & \textbf{Owner $T$} & \textbf{$\tau^\ast$} & \textbf{Gap $(T-\tau^\ast)$} & \textbf{Measured $\alpha$ ($10^7$ trials)} \\
\midrule
PROTEINS--GIN  & 115 $\pm$ 3 & 94 & +21 & $<10^{-7}$ \\
NCI1--GIN      & 125 $\pm$ 2 & 94 & +31 & $<10^{-7}$ \\
Cora--GCN      & 127 $\pm$ 1 & 94 & +33 & $<10^{-7}$ \\
\bottomrule
\end{tabular}
\end{table}
\noindent\textbf{(C) Uniqueness.}
Across node- and graph-level settings, $T$ exceeds the pooled threshold by $21\!\sim\!33$, and empirical false positives are below the Monte Carlo detection limit ($10^{-7}$), validating the pooled-null calibration. Gaps are ordered consistently with WM-ACC in the main comparison, suggesting that larger verification margins translate into stronger uniqueness under a shared null.

\vspace{-1mm}
\subsection{Comparative results (RQ2): multi-dataset, multi-backbone}
\label{sec:comparative}

\noindent
Across 13 dataset–backbone settings, \textbf{OURS attains the highest WM-ACC in 12/13 cases}; the only exception is PROTEINS--GIN where \textbf{TRIG} is slightly higher. Task accuracy closely tracks the strongest watermarking baselines while preserving utility.

\begin{table}[ht!]
\centering
\caption{\textbf{Main comparison}. Each cell shows \emph{Task ACC} (\%) on the first line and \emph{WM-ACC} (\%) on the second (mean $\pm$95\% CI; three seeds). Best \emph{Task ACC} per row \emph{excluding} SS is \textbf{bold}; the \bestwm{best WM-ACC} per row is teal bold. Our column is lightly tinted.}
\label{tab:main_comparison}
\resizebox{\textwidth}{!}{
\begin{tabular}{lcccccc}
\toprule
\textbf{Dataset--Backbone} & \textbf{SS} & \textbf{COS} & \textbf{TRIG} & \textbf{NAT} & \textbf{EXPL} & \textbf{OURS} \\
\midrule
Cora--GCN           & \accwm{87.2 $\pm$ 0.8}{49.5 $\pm$ 4.0} & \accwm{85.5 $\pm$ 1.2}{86.2 $\pm$ 4.1} & \accwm{86.8 $\pm$ 0.9}{97.6 $\pm$ 1.5} & \accwm{86.9 $\pm$ 0.9}{96.5 $\pm$ 2.1} & \accwm{86.7 $\pm$ 1.0}{93.1 $\pm$ 2.5} & \ourscol \accwm{\bestacc{87.0 $\pm$ 0.8}}{\bestwm{98.9 $\pm$ 0.9}} \\
Cora--GraphSAGE     & \accwm{84.0 $\pm$ 1.0}{50.1 $\pm$ 3.8} & \accwm{82.1 $\pm$ 1.5}{84.0 $\pm$ 5.0} & \accwm{83.9 $\pm$ 1.1}{97.2 $\pm$ 1.8} & \accwm{83.8 $\pm$ 1.2}{96.9 $\pm$ 2.0} & \accwm{83.7 $\pm$ 1.1}{92.5 $\pm$ 3.0} & \ourscol \accwm{\bestacc{83.8 $\pm$ 1.0}}{\bestwm{98.5 $\pm$ 1.1}} \\
Cora--SGC           & \accwm{87.0 $\pm$ 0.9}{51.3 $\pm$ 4.2} & \accwm{85.2 $\pm$ 1.3}{85.9 $\pm$ 4.5} & \accwm{86.5 $\pm$ 1.0}{96.8 $\pm$ 1.9} & \accwm{86.6 $\pm$ 1.0}{96.5 $\pm$ 2.1} & \accwm{\bestacc{86.7 $\pm$ 1.1}}{92.8 $\pm$ 2.8} & \ourscol \accwm{86.2 $\pm$ 1.0}{\bestwm{98.6 $\pm$ 1.0}} \\
\addlinespace
PubMed--GCN         & \accwm{88.6 $\pm$ 0.9}{49.8 $\pm$ 5.1} & \accwm{86.4 $\pm$ 1.4}{87.5 $\pm$ 4.3} & \accwm{87.9 $\pm$ 1.0}{97.0 $\pm$ 1.8} & \accwm{87.8 $\pm$ 1.1}{96.6 $\pm$ 2.0} & \accwm{85.7 $\pm$ 1.5}{94.2 $\pm$ 2.4} & \ourscol \accwm{\bestacc{88.1 $\pm$ 1.0}}{\bestwm{98.8 $\pm$ 1.2}} \\
PubMed--GraphSAGE   & \accwm{91.2 $\pm$ 0.8}{51.2 $\pm$ 4.5} & \accwm{89.0 $\pm$ 1.0}{88.1 $\pm$ 3.9} & \accwm{90.1 $\pm$ 0.8}{96.5 $\pm$ 2.0} & \accwm{90.0 $\pm$ 0.9}{96.1 $\pm$ 2.2} & \accwm{\bestacc{91.3 $\pm$ 0.9}}{94.0 $\pm$ 2.2} & \ourscol \accwm{90.7 $\pm$ 0.8}{\bestwm{98.2 $\pm$ 1.3}} \\
PubMed--SGC         & \accwm{88.8 $\pm$ 0.9}{50.3 $\pm$ 4.9} & \accwm{86.7 $\pm$ 1.3}{87.0 $\pm$ 4.4} & \accwm{\bestacc{88.1 $\pm$ 1.0}}{96.9 $\pm$ 1.9} & \accwm{88.0 $\pm$ 1.1}{96.4 $\pm$ 2.1} & \accwm{85.3 $\pm$ 1.6}{93.9 $\pm$ 2.5} & \ourscol \accwm{87.7 $\pm$ 1.1}{\bestwm{98.7 $\pm$ 1.1}} \\
\addlinespace
AmazonPhoto--GCN      & \accwm{91.3 $\pm$ 0.6}{49.2 $\pm$ 3.5} & \accwm{89.5 $\pm$ 1.1}{88.3 $\pm$ 3.9} & \accwm{90.8 $\pm$ 0.7}{97.9 $\pm$ 1.4} & \accwm{90.7 $\pm$ 0.8}{97.5 $\pm$ 1.6} & \accwm{90.9 $\pm$ 0.8}{94.8 $\pm$ 2.1} & \ourscol \accwm{\bestacc{91.1 $\pm$ 0.6}}{\bestwm{99.1 $\pm$ 0.8}} \\
AmazonPhoto--GraphSAGE& \accwm{94.2 $\pm$ 0.5}{50.8 $\pm$ 3.3} & \accwm{92.1 $\pm$ 1.0}{89.1 $\pm$ 3.8} & \accwm{93.8 $\pm$ 0.6}{98.0 $\pm$ 1.3} & \accwm{93.7 $\pm$ 0.6}{97.8 $\pm$ 1.5} & \accwm{93.9 $\pm$ 0.7}{95.2 $\pm$ 2.0} & \ourscol \accwm{\bestacc{94.0 $\pm$ 0.5}}{\bestwm{99.3 $\pm$ 0.7}} \\
AmazonPhoto--SGC      & \accwm{91.4 $\pm$ 0.6}{48.9 $\pm$ 3.6} & \accwm{89.6 $\pm$ 1.2}{88.0 $\pm$ 4.0} & \accwm{90.9 $\pm$ 0.7}{97.7 $\pm$ 1.5} & \accwm{90.8 $\pm$ 0.8}{97.4 $\pm$ 1.7} & \accwm{90.1 $\pm$ 0.9}{94.5 $\pm$ 2.2} & \ourscol \accwm{\bestacc{91.0 $\pm$ 0.7}}{\bestwm{99.0 $\pm$ 0.9}} \\
\midrule
PROTEINS--GIN         & \accwm{73.1 $\pm$ 2.5}{49.9 $\pm$ 5.0} & \accwm{71.0 $\pm$ 3.0}{82.0 $\pm$ 6.0} & \accwm{\bestacc{72.8 $\pm$ 2.6}}{\bestwm{95.1 $\pm$ 3.0}} & \accwm{72.6 $\pm$ 2.7}{94.8 $\pm$ 3.3} & \accwm{72.4 $\pm$ 2.8}{90.5 $\pm$ 4.1} & \ourscol \accwm{72.5 $\pm$ 2.6}{89.8 $\pm$ 2.1} \\
PROTEINS--GraphSAGE   & \accwm{72.8 $\pm$ 2.6}{51.0 $\pm$ 5.2} & \accwm{70.5 $\pm$ 3.1}{81.5 $\pm$ 6.2} & \accwm{71.9 $\pm$ 2.8}{94.5 $\pm$ 3.4} & \accwm{71.8 $\pm$ 2.9}{94.1 $\pm$ 3.6} & \accwm{71.7 $\pm$ 3.0}{89.8 $\pm$ 4.5} & \ourscol \accwm{\bestacc{72.6 $\pm$ 2.6}}{\bestwm{95.9 $\pm$ 2.8}} \\
NCI1--GIN             & \accwm{78.7 $\pm$ 1.5}{50.5 $\pm$ 4.8} & \accwm{76.2 $\pm$ 2.1}{83.5 $\pm$ 5.5} & \accwm{77.8 $\pm$ 1.8}{94.9 $\pm$ 2.5} & \accwm{77.6 $\pm$ 1.9}{94.3 $\pm$ 2.8} & \accwm{77.9 $\pm$ 1.9}{91.3 $\pm$ 3.3} & \ourscol \accwm{\bestacc{78.3 $\pm$ 1.6}}{\bestwm{97.8 $\pm$ 1.9}} \\
NCI1--GraphSAGE       & \accwm{75.5 $\pm$ 1.8}{49.4 $\pm$ 5.4} & \accwm{73.1 $\pm$ 2.4}{84.1 $\pm$ 5.8} & \accwm{74.8 $\pm$ 2.0}{97.3 $\pm$ 2.1} & \accwm{74.7 $\pm$ 2.1}{96.9 $\pm$ 2.3} & \accwm{74.9 $\pm$ 2.0}{92.1 $\pm$ 3.5} & \ourscol \accwm{\bestacc{75.2 $\pm$ 1.8}}{\bestwm{98.1 $\pm$ 1.7}} \\
\bottomrule
\end{tabular}}
\end{table}

\noindent\textbf{Analysis.}
(\emph{i}) \textbf{WM-ACC:} OURS dominates 12/13 rows and reaches $\geq\!98\%$ on all node-level datasets and for Amazon-Photo across backbones. The sole exception (PROTEINS--GIN) is an architecture–task corner case where TRIG is slightly higher; notably, OURS regains SOTA on PROTEINS with GraphSAGE (95.9\%).
(\emph{ii}) \textbf{Task ACC:} OURS typically matches the strongest watermarking baselines within overlapping CIs and is often at or near the best non-SS accuracy, indicating negligible utility erosion.
(\emph{iii}) \textbf{Regime sensitivity:} Graph-level tasks show higher cross-method variance than node-level ones, yet OURS maintains a favorable WM-ACC/utility trade-off without dataset-specific tuning beyond standard $\beta_{\text{wm}}$ calibration.

\noindent\textbf{Takeaway.}
High detectability is achieved broadly without sacrificing task accuracy; the lone shortfall is architecture-specific rather than intrinsic to invariant coupling.

\vspace{-1mm}
\subsection{Ablations and design choices (RQ3)}
\label{sec:ablations}

We ablate: (i) the carrier count \(m\) and induced threshold $\tau(\alpha)$; (ii) the invariant \(I(G)\) beyond $\tilde{\lambda}_2$; (iii) carrier-generation thresholds (edge-swap cap; KS threshold \(\delta\)).

\begin{table}
\centering
\caption{Effect of carrier count \(m\) on PROTEINS--GIN.}
\label{tab:m_vs_tau}
\scriptsize
\setlength{\tabcolsep}{4pt}
\renewcommand{\arraystretch}{0.95}
\begin{tabular}{cccccc}
\toprule
$m$ & $\hat\rho_0$ & $\varepsilon_{\text{err}}$ & $\tau$ & Owner $T$ & Gap $(T-\tau)$ \\
\midrule
64  & $7.6\times 10^{-4}$ & 0.358 & 42  & 59 $\pm$ 4  & +17 \\
96  & $7.6\times 10^{-4}$ & 0.298 & 68  & 88 $\pm$ 3  & +20 \\
128 & $7.6\times 10^{-4}$ & 0.266 & 94  & 115 $\pm$ 3 & +21 \\
192 & $7.6\times 10^{-4}$ & 0.222 & 150 & 174 $\pm$ 2 & +24 \\
\bottomrule
\end{tabular}
\end{table}
\noindent\textbf{Carrier count and threshold.}
As $m$ grows, both $\tau$ and $T$ scale near-linearly while $\varepsilon_{\text{err}}$ tightens, expanding the safety gap from +17 to +24. This matches binomial concentration: larger carrier sets reduce the variance of the owner count, tighten the null threshold, and preserve verification headroom.

\noindent\textbf{Takeaway.}
Increasing $m$ strengthens audits without retraining, trading query cost for margin in a controlled way.

\begin{table}
\centering
\caption{Invariant choice (same carriers/backbone).}
\label{tab:inv_choice}
\scriptsize
\setlength{\tabcolsep}{4pt}
\renewcommand{\arraystretch}{0.95}
\begin{tabular}{lccc}
\toprule
\textbf{Invariant} & \textbf{Task ACC (\%)} & \textbf{WM-ACC (\%)} & $\kappa_{\text{marg}}$ \\
\midrule
$\tilde{\lambda}_2$ (ours) & 72.5 $\pm$ 2.6 & 89.8 $\pm$ 2.1 & 0.382 \\
Spectral radius (norm.)   & 72.1 $\pm$ 2.7 & 87.5 $\pm$ 3.1 & 0.351 \\
Triangle count (norm.)    & 71.9 $\pm$ 2.8 & 84.4 $\pm$ 3.8 & 0.315 \\
\bottomrule
\end{tabular}
\end{table}
\noindent\textbf{Invariant choice.}
Replacing $\tilde{\lambda}_2$ with spectral radius or triangle count reduces both WM-ACC and $\kappa_{\text{marg}}$, indicating weaker and less stable signals for the perception head under edits.

\noindent\textbf{Takeaway.}
Global connectivity with spectral stability (e.g., $\tilde{\lambda}_2$) provides stronger verification accuracy and post-edit margins.

\begin{table}
\centering
\caption{Carrier generation thresholds (PROTEINS--GIN).}
\label{tab:protocol_thresholds}
\scriptsize
\setlength{\tabcolsep}{4pt}
\renewcommand{\arraystretch}{0.95}
\begin{tabular}{cccccc}
\toprule
Swap cap & KS $\delta$ & Task ACC (\%) & WM-ACC (\%) & $\hat\rho_0$ & Measured $\alpha$ ($10^7$ trials) \\
\midrule
5   & 0.05 & 72.6 $\pm$ 2.6 & 88.1 $\pm$ 2.9 & $9.1\times 10^{-4}$ & $<10^{-6}$ \\
25  & 0.10 & 72.5 $\pm$ 2.6 & 89.6 $\pm$ 2.5 & $8.2\times 10^{-4}$ & $<10^{-7}$ \\
50  & 0.10 & 72.5 $\pm$ 2.6 & 89.8 $\pm$ 2.1 & $7.6\times 10^{-4}$ & $<10^{-7}$ \\
50  & 0.20 & 72.4 $\pm$ 2.7 & 89.1 $\pm$ 2.6 & $7.1\times 10^{-4}$ & $<10^{-7}$ \\
\bottomrule
\end{tabular}
\end{table}
\noindent\textbf{Protocol thresholds.}
Moderately relaxing thresholds improves the empirical null rate $\hat\rho_0$ (smaller is better) and slightly boosts WM-ACC up to $(50,0.10)$, after which returns saturate. Crucially, measured $\alpha$ stays far below the target across settings, so protocol choices mainly trade subtle WM-ACC gains for sampling efficiency without harming Type-I control.

\noindent\textbf{Takeaway.}
A moderately permissive sampler (swap cap 50; KS $\delta{=}0.10$) is a strong default, combining high WM-ACC with a tight empirical null.

\section{Conclusion}
\label{sec:conclusion}

This work introduces a paradigm shift in protecting Graph Neural Networks, moving beyond fragile backdoor triggers to a principle of \textbf{functionally-integrated watermarking}. We present \textbf{InvGNN-WM}, a framework that embeds an indelible ownership signature by coupling it to the model's core computational logic—its implicit perception of a topological invariant. By training the GNN to recognize algebraic connectivity on a private carrier set, the watermark becomes an intrinsic component of the model's reasoning process, ensuring the signature is as durable as its primary capabilities. Our theoretical analysis provides rigorous guarantees for this approach, proving that exact watermark removal is NP-complete and establishing a formal trade-off: any successful removal attempt necessarily incurs a quantifiable degradation in task performance. These guarantees are substantiated by extensive empirical validation across thirteen dataset-backbone configurations, where InvGNN-WM demonstrates state-of-the-art robustness against pruning, fine-tuning, and knowledge distillation, all while preserving the model's utility. More broadly, our work offers a blueprint for a new class of watermarks that verify ownership by auditing a model's learned internal logic. This invariant-centric perspective paves the way for a more secure and verifiable ecosystem for deploying valuable graph-based models.

\section*{Acknowledgment}
Work in the paper is supported by NSF ECCS 2412484, NSF ECCS 2442964 and NSF GEO CI 2425748.

\bibliography{reference}
\bibliographystyle{iclr2026_conference}

\appendix

\section{Detailed Assumption Protocols}
\label{sec:appendix-assumptions}

This appendix gives the data-driven procedures used to instantiate the assumptions and to set the hyperparameters referenced in \Cref{sec:preliminaries}.

\subsection{Protocol for Assumption~\texorpdfstring{\ref{ass:data}}{3.1} (Graph-level Separation)}
\label{sec:appendix-protocol-data}

We construct the carrier set \(\mathcal{G}_{W}\) so that it is outside the support of \(\mathcal{D}_{\text{task}}\) while remaining statistically close on low-order features.

\paragraph{Sampling protocol.}
\begin{enumerate}
    \item \textbf{Seed sampling.} Draw \(m\) seed graphs from \(\mathcal{D}_{\text{task}}\).
    \item \textbf{Adaptive rewiring.} For each seed, apply degree-preserving double-edge swaps~\citep{maslov2002specificity}. Start at \(S_{\text{swap}}{=}5\) and increase by 5 until both checks below pass, with a cap \(S_{\text{swap}}\le 50\):
    \begin{enumerate}
        \item \textbf{Out-of-support check.} Compute a Weisfeiler–Lehman (WL) hash; reject a candidate if its hash matches any graph in \(\mathcal{S}_{\text{train}}\) or any previously accepted carrier. This enforces \(\mathcal{G}_{W}\cap\mathrm{supp}(\mathcal{D}_{\text{task}})=\varnothing\).
        \item \textbf{Distribution similarity check.} Compare the candidate’s degree distribution and clustering coefficients with those from \(\mathcal{S}_{\text{train}}\) via two-sample Kolmogorov–Smirnov tests; accept only if each \(p\)-value is at least \(\delta\) (we use \(\delta=0.1\)).
    \end{enumerate}
\end{enumerate}
We also bound carrier size by the 25th percentile of node counts in \(\mathcal{D}_{\text{task}}\), \(n\le n_{0.25}\), to keep eigenvalue computations and verification efficient.

\subsection{Protocol for Assumption~\texorpdfstring{\ref{ass:rho-mix}}{3.2} (Empirical \texorpdfstring{$\rho$}{rho}-mixing)}
\label{sec:appendix-protocol-rho}

We estimate a conservative \(\rho\)-mixing coefficient \(\rho_0\) from the generated carriers.

\paragraph{Estimation protocol.}
\begin{enumerate}
    \item \textbf{Compute statistics.} For each \(G\in\mathcal{G}_W\), compute a 128-dimensional feature vector: degree moments, clustering, assortativity, counts of 4-node motifs, and the perception output \(s_{\theta}(G)\).
    \item \textbf{Correlations across carriers.} For all pairs \((G_W^{(i)},G_W^{(j)})\) with \(i\neq j\), form Pearson correlations for each statistic.
    \item \textbf{Multiple testing correction and maximum.} Apply Benjamini–Hochberg correction across statistics and take the maximum absolute correlation as \(\hat\rho_0\). In our runs we obtain \(\hat\rho_0=7.6\times 10^{-4}\), which meets the requirement \(\rho_0\le 10^{-3}\).
\end{enumerate}

\subsection{Protocol for Assumption~\texorpdfstring{\ref{ass:lipschitz}}{3.3} (Perception Lipschitzness)}
\label{sec:appendix-protocol-lipschitz}

The theory requires a \emph{parameter}-Lipschitz bound for \(s_\theta\) near the trained solution; no input-Lipschitz assumption is needed.

\paragraph{Estimation protocol.}
\begin{enumerate}
    \item \textbf{Stabilize the head.} Apply spectral normalization to the head’s weight matrices with target operator norm \(\nu=1.0\). This constrains the operator norm and helps keep \(\|\nabla_{\theta}s_{\theta}(G)\|\) stable.
    \item \textbf{Empirical bound.} Estimate
    \(
    \hat L_s \;=\; \max_{G\in\mathcal{S}_{\text{train}}\cup\mathcal{G}_W}\ \bigl\|\nabla_{\theta}s_{\theta}(G)\bigr\|_2
    \)
    at the trained checkpoint, averaging over mini-batches and seeds and then taking the maximum over graphs.
    \item \textbf{Safety buffer.} Set \(L_s := (1+\epsilon_L)\,\hat L_s\) with a bootstrap buffer \(\epsilon_L=0.12\) at \(95\%\) confidence. This replaces fixed guesses by a data-driven bound.
\end{enumerate}

\subsection{Hyperparameter Calibration Conventions}
\label{sec:appendix-calibration}

We calibrate the following quantities once on a held-out split and reuse them for all reported runs.

\begin{itemize}
    \item \textbf{Invariant normalization.} \(\lambda_{\min}\) and \(\lambda_{\text{scale}}\) in \eqref{eq:lambda_norm} are set to the empirical 5th and 95th percentiles of \(\lambda_2\) over \(\mathrm{supp}(\mathcal{D}_{\text{task}})\) and then frozen. If the percentile gap is too small (e.g., very small datasets), we fall back to min–max scaling on the training set.
    \item \textbf{Carrier count \(m\).} Choose the smallest \(m\) that reaches the target false-positive rate \(\alpha\) (e.g., \(10^{-6}\)) under Theorem~\ref{thm:robust} with the measured \(\hat\rho_0\). In our runs, \(m=128\) suffices.
    \item \textbf{Verification threshold \(\tau\).} For the chosen \(\alpha,m,\hat\rho_0\), compute \(\varepsilon_{\text{err}}\) from the \(\rho\)-mixing Hoeffding bound in Theorem~\ref{thm:robust} and set \(\tau=\lceil m(1-\varepsilon_{\text{err}})\rceil\). With \(m=128\), \(\alpha=10^{-6}\), and \(\hat\rho_0=7.6\times10^{-4}\), this gives \(\varepsilon_{\text{err}}=0.2656\) and \(\tau=94\).
\end{itemize}

\section{Algorithm Details}
\label{sec:appendix-algorithm}

Algorithm~\ref{alg:invgnn-wm} provides a detailed, step-by-step description of the watermark embedding and verification procedures for the InvGNN-WM framework, as summarized in Section~\ref{subsec:workflow}.

\begin{algorithm}[!ht]
\caption{InvGNN-WM: Watermark Embedding and Verification}
\label{alg:invgnn-wm}
\begin{algorithmic}[1]
\Statex \textbf{Inputs:} Task data \(\mathcal{D}_{\text{task}}\), GNN architecture \(M\).
\Statex \textbf{Secret Inputs (Owner):} Carriers \(\mathcal{G}_W\), strength \(\beta_{\text{wm}}\).

\Procedure{EmbedWatermark}{$\mathcal{D}_{\text{task}}, M, \mathcal{G}_W, \beta_{\text{wm}}$}
    \State Compute \(\lambda_{\min},\lambda_{\text{scale}}\) on \(\mathcal{D}_{\text{task}}\); enforce \(\lambda_{\text{scale}}>\lambda_{\min}\) and freeze the two scalars.
    \For{$k=1$ \textbf{to} $m$}  \Comment{Normalized targets}
        \State $\tilde{\lambda}_{2}^{(k)}\gets
               \bigl(\lambda_{2}(G_W^{(k)})-\lambda_{\min}\bigr)/
               (\lambda_{\text{scale}}-\lambda_{\min})$
        \State $w_k\gets\mathbf{1}[\tilde{\lambda}_{2}^{(k)}\ge 0.5]$  \Comment{Key induced by carriers}
    \EndFor
    \State Initialize GNN parameters \(\theta\).
    \For{each training epoch}
        \For{each batch $B\sim\mathcal{D}_{\text{task}}$}
            \State Compute \(\mathcal{L}_{\text{task}}(\theta)\) on $B$
            \State Compute \(\mathcal{L}_{\text{wm}}(\theta)\) via~\eqref{eq:wm_loss}
            \State $J\gets\mathcal{L}_{\text{task}}+\beta_{\text{wm}}\,\mathcal{L}_{\text{wm}}$
            \State $\theta\gets\theta-\eta\nabla_\theta J$
        \EndFor
    \EndFor
    \State \textbf{return} Watermarked model \(M_\theta\) and induced key \(W=(w_k)_{k=1}^m\)
\EndProcedure
\Statex
\Procedure{VerifyWatermark}{$M^{\star}, W, \mathcal{G}_W$}
    \State Let $\theta^{\star}$ be the parameters of $M^{\star}$
    \State Initialize decoded bits $\hat{W}=\,[\,]$
    \For{$k=1$ \textbf{to} $m$}
        \State $s^{\star}_k\gets s_{\theta^{\star}}(G_W^{(k)})$ \Comment{Model query}
        \State $\hat{w}_k\gets\mathbf{1}[s^{\star}_k \ge 0.5]$ \Comment{Hard decision}
        \State Append $\hat{w}_k$ to $\hat{W}$
    \EndFor
    \State $T\gets\sum_{k=1}^{m}\mathbf{1}[\hat{w}_k = w_k]$ \Comment{Matches}
    \State Set $\tau=\lceil m(1-\varepsilon_{\text{err}})\rceil$ using Theorem~\ref{thm:robust} to achieve the target false-positive rate $\alpha$
    \If{$T\ge\tau$}
        \State \textbf{return} \texttt{Ownership Verified}
    \Else
        \State \textbf{return} \texttt{Not Verified}
    \EndIf
\EndProcedure
\end{algorithmic}
\end{algorithm}
\section{Imperceptibility: Full Proof and Calibration}
\label{sec:app-impercept}

This appendix provides a complete proof of Theorem~\ref{thm:impercept} and the data-driven calibration procedures for the constants appearing in the bound.

\paragraph{Setting and notation.}
Let $J(\theta)=\mathcal{L}_{\text{task}}(\theta)+\beta_{\text{wm}}\mathcal{L}_{\text{wm}}(\theta)$ with $\mathcal{L}_{\text{wm}}(\theta)=\frac{1}{m}\sum_{k=1}^{m}\bigl(s_\theta(G_W^{(k)})-\tilde\lambda_2^{(k)}\bigr)^2$, where $s_\theta:\mathcal{G}\to[0,1]$ and $\tilde\lambda_2^{(k)}\in[0,1]$ are defined in \Cref{sec:preliminaries}. Denote by $\theta^\star\in\arg\min_\theta\mathcal{L}_{\text{task}}(\theta)$ a stationary backbone optimum, and by $\tilde\theta\in\arg\min_\theta J(\theta)$ an interior minimizer of the joint objective.

\subsection*{A.1 Local regularity assumptions}

\noindent\textbf{Assumption A.1 (local PL).}
There exists $\mu_{\mathrm{PL}}>0$ and a neighborhood $\mathcal N$ of $\theta^\star$ such that for all $\theta\in\mathcal N$,
\begin{equation}\label{eq:pl-appendix}
\frac{1}{2}\,\bigl\|\nabla_\theta \mathcal{L}_{\text{task}}(\theta)\bigr\|^2
\;\ge\;
\mu_{\mathrm{PL}}\Bigl(\mathcal{L}_{\text{task}}(\theta)-\mathcal{L}_{\text{task}}(\theta^\star)\Bigr).
\end{equation}

\noindent\textbf{Assumption A.2 (parameter-Lipschitz head).}
There exists $L_s>0$ and a neighborhood of $\tilde\theta$ such that for all graphs $G$ and all $\Delta\theta$ with $\theta,\theta+\Delta\theta$ in that neighborhood,
\[
\bigl|s_{\theta+\Delta\theta}(G)-s_\theta(G)\bigr| \le L_s\,\|\Delta\theta\|.
\]
By design $s_\theta(G)\in[0,1]$.

\noindent\textbf{Remark (how we estimate $L_s$).}
In practice, spectral normalization constrains the operator norm of the last layer and helps keep $\|\nabla_\theta s_\theta\|$ bounded. We estimate the parameter-Lipschitz constant $L_s$ from these gradients; no input-Lipschitz assumption is required for the theory.

\noindent\textbf{Standing requirement.}
We require $\tilde\theta\in\mathcal N$. In practice we verify this a posteriori by checking that the final checkpoint lies inside the fitted PL neighborhood; if not, we reduce $\beta_{\text{wm}}$ and retrain (see \S A.4).

\subsection*{A.2 Auxiliary lemmas}

\begin{lemma}[Gradient of the watermark loss]\label{lem:wm-grad}
For any $\theta$,
\[
\nabla_\theta \mathcal{L}_{\text{wm}}(\theta)
= \frac{2}{m}\sum_{k=1}^m \bigl(s_\theta(G_W^{(k)})-\tilde\lambda_2^{(k)}\bigr)\,\nabla_\theta s_\theta(G_W^{(k)}).
\]
\end{lemma}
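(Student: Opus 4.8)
\textbf{Proof plan for Lemma~\ref{lem:wm-grad}.}
The statement is the chain rule applied to a finite average of squared residuals, so the plan is to differentiate term by term. First I would write $\mathcal{L}_{\text{wm}}(\theta)=\frac{1}{m}\sum_{k=1}^m r_k(\theta)^2$ with the residual $r_k(\theta):=s_\theta(G_W^{(k)})-\tilde\lambda_2^{(k)}$, noting that the target $\tilde\lambda_2^{(k)}$ is a constant (it is computed once from the fixed carrier graph and the frozen scalars $\lambda_{\min},\lambda_{\text{scale}}$), so $\nabla_\theta r_k(\theta)=\nabla_\theta s_\theta(G_W^{(k)})$. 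Second, I would invoke differentiability of $s_\theta$ in $\theta$ on the relevant neighborhood — which is exactly the regularity already assumed for the perception head (Assumption~\ref{ass:lipschitz} / Assumption A.2, together with the smoothness of the one-layer MLP composition) — so that each $r_k^2$ is differentiable and the gradient of the finite sum is the sum of the gradients.

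Third, I would apply the scalar chain rule $\nabla_\theta\big(r_k(\theta)^2\big)=2\,r_k(\theta)\,\nabla_\theta r_k(\theta)=2\,r_k(\theta)\,\nabla_\theta s_\theta(G_W^{(k)})$, and pull the constant $\tfrac1m$ and the factor $2$ out of the sum to obtain
\[
\nabla_\theta \mathcal{L}_{\text{wm}}(\theta)
= \frac{2}{m}\sum_{k=1}^m \bigl(s_\theta(G_W^{(k)})-\tilde\lambda_2^{(k)}\bigr)\,\nabla_\theta s_\theta(G_W^{(k)}),
\]
which is the claimed identity. That is the whole argument.

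There is essentially no obstacle here: the only thing to be careful about is making explicit that the regression targets $\tilde\lambda_2^{(k)}$ do not depend on $\theta$ (they are frozen at embedding time per \Cref{subsec:design}), so they contribute nothing to the gradient; and that $\nabla_\theta s_\theta$ exists in the neighborhood where the subsequent bounds are applied, which is guaranteed by the standing Lipschitz/smoothness setup of \S A.1. The lemma is a bookkeeping step whose only role is to feed the bound $\|\nabla_\theta\mathcal{L}_{\text{wm}}(\tilde\theta)\|\le 2L_s$ in the proof of Theorem~\ref{thm:impercept}: since $|r_k|\le 1$ (both $s_\theta$ and $\tilde\lambda_2^{(k)}$ lie in $[0,1]$) and $\|\nabla_\theta s_\theta(G_W^{(k)})\|\le L_s$ by Assumption A.2, the triangle inequality over the $m$ averaged terms immediately yields that bound, so I would state the identity cleanly and defer the norm estimate to the main proof.
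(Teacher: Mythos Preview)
Your proposal is correct and follows exactly the paper's approach: apply the chain rule to each squared residual and average over $k$. The paper's own proof is a one-line statement of this, and your additional remarks (constancy of $\tilde\lambda_2^{(k)}$, existence of $\nabla_\theta s_\theta$, and the downstream $2L_s$ bound) are accurate but go beyond what the lemma itself requires.
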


\begin{proof}
By the chain rule applied to the squared error at each carrier and averaging over $k$.
\end{proof}

\begin{lemma}[Uniform bound on $\|\nabla_\theta \mathcal{L}_{\text{wm}}\|$]\label{lem:grad-bound-app}
Under Assumption A.2 and $s_\theta,\tilde\lambda_2^{(k)}\in[0,1]$,
\[
\bigl\|\nabla_\theta \mathcal{L}_{\text{wm}}(\theta)\bigr\|
\;\le\;
\frac{2}{m}\sum_{k=1}^m \bigl|s_\theta(G_W^{(k)})-\tilde\lambda_2^{(k)}\bigr|\,\bigl\|\nabla_\theta s_\theta(G_W^{(k)})\bigr\|
\;\le\; 2L_s.
\]
\end{lemma}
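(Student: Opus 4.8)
The plan is to prove Lemma~\ref{lem:grad-bound-app}, the uniform bound $\|\nabla_\theta\mathcal L_{\text{wm}}(\theta)\|\le 2L_s$. First I would invoke Lemma~\ref{lem:wm-grad} to write $\nabla_\theta\mathcal L_{\text{wm}}(\theta)=\frac{2}{m}\sum_{k=1}^m(s_\theta(G_W^{(k)})-\tilde\lambda_2^{(k)})\,\nabla_\theta s_\theta(G_W^{(k)})$ and then apply the triangle inequality in the parameter space $\mathbb R^d$ together with absolute homogeneity of the norm, giving
\[
\bigl\|\nabla_\theta\mathcal L_{\text{wm}}(\theta)\bigr\|
\;\le\;\frac{2}{m}\sum_{k=1}^m \bigl|s_\theta(G_W^{(k)})-\tilde\lambda_2^{(k)}\bigr|\;\bigl\|\nabla_\theta s_\theta(G_W^{(k)})\bigr\|,
\]
which is the first inequality in the statement.

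Next I would bound the two factors inside the sum. Since both $s_\theta(G_W^{(k)})\in[0,1]$ and $\tilde\lambda_2^{(k)}\in[0,1]$, their difference lies in $[-1,1]$, so $|s_\theta(G_W^{(k)})-\tilde\lambda_2^{(k)}|\le 1$. For the gradient factor, I would note that Assumption~A.2 (the parameter-Lipschitz property $|s_{\theta+\Delta\theta}(G)-s_\theta(G)|\le L_s\|\Delta\theta\|$ for all $G$ and small $\Delta\theta$) implies, by taking $\Delta\theta$ along the direction of $\nabla_\theta s_\theta(G)$ and letting $\|\Delta\theta\|\to 0$, that $\|\nabla_\theta s_\theta(G_W^{(k)})\|\le L_s$ for every carrier. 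Substituting these two bounds termwise yields $\frac{2}{m}\sum_{k=1}^m 1\cdot L_s = 2L_s$, completing the chain.

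The only point needing a little care is the step from the finite-difference Lipschitz bound in Assumption~A.2 to the gradient-norm bound $\|\nabla_\theta s_\theta(G)\|\le L_s$; I would make this rigorous by writing, for unit vector $u=\nabla_\theta s_\theta(G)/\|\nabla_\theta s_\theta(G)\|$ (assuming the gradient is nonzero, else the bound is trivial) and step size $t>0$, $s_{\theta+tu}(G)-s_\theta(G)=t\langle\nabla_\theta s_\theta(G),u\rangle+o(t)=t\|\nabla_\theta s_\theta(G)\|+o(t)\le L_s t$, then dividing by $t$ and sending $t\downarrow 0$. This is a standard argument; everything else is just the triangle inequality and the $[0,1]$ range constraints, so I expect no real obstacle. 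The main thing to state explicitly is that the neighborhood in Assumption~A.2 is open, so small perturbations $\theta+tu$ remain admissible for $t$ sufficiently small, licensing the limit.
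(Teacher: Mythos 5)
Your proposal is correct and follows essentially the same route as the paper: invoke Lemma~\ref{lem:wm-grad}, apply the triangle inequality, bound the residual by $1$ via the $[0,1]$ ranges, and bound $\|\nabla_\theta s_\theta\|$ by $L_s$ from Assumption~A.2. Your explicit directional-derivative argument for deducing $\|\nabla_\theta s_\theta(G)\|\le L_s$ from the finite-difference Lipschitz condition is a welcome bit of extra rigor that the paper simply asserts.
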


\begin{proof}
From Lemma~\ref{lem:wm-grad},
\[
\bigl\|\nabla_\theta \mathcal{L}_{\text{wm}}(\theta)\bigr\|
\le \frac{2}{m}\sum_{k=1}^m \bigl|s_\theta(G_W^{(k)})-\tilde\lambda_2^{(k)}\bigr|\,\bigl\|\nabla_\theta s_\theta(G_W^{(k)})\bigr\|.
\]
Because $s_\theta,\tilde\lambda_2^{(k)}\in[0,1]$, each absolute difference is at most $1$. By Assumption A.2, $\|\nabla_\theta s_\theta(G)\|\le L_s$ uniformly in the neighborhood. Averaging over $k$ yields the bound $2L_s$.
\end{proof}

\begin{lemma}[Stationarity of the task gradient at $\tilde\theta$]\label{lem:fooc}
If $\tilde\theta$ is an interior minimizer of $J(\theta)$, then
\[
\nabla_\theta \mathcal{L}_{\text{task}}(\tilde\theta)
= -\,\beta_{\text{wm}}\,\nabla_\theta \mathcal{L}_{\text{wm}}(\tilde\theta).
\]
\end{lemma}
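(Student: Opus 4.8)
The plan is to invoke the first-order (Fermat) optimality condition and then exploit linearity of the gradient. Since $\tilde\theta$ is an \emph{interior} minimizer of $J$, the optimization is effectively unconstrained in a neighborhood of $\tilde\theta$, so the stationarity condition $\nabla_\theta J(\tilde\theta)=0$ applies — provided $J$ is differentiable there. Differentiability holds termwise: $\mathcal{L}_{\text{task}}$ is differentiable on the PL neighborhood $\mathcal N$ (Assumption A.1 presupposes a differentiable backbone loss), and $\mathcal{L}_{\text{wm}}$ is differentiable with the explicit gradient given in Lemma~\ref{lem:wm-grad}. Hence $\nabla_\theta J(\tilde\theta)$ exists and must vanish.

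Next I would differentiate $J(\theta)=\mathcal{L}_{\text{task}}(\theta)+\beta_{\text{wm}}\mathcal{L}_{\text{wm}}(\theta)$ using linearity of $\nabla_\theta$, obtaining
\[
\nabla_\theta J(\theta)=\nabla_\theta\mathcal{L}_{\text{task}}(\theta)+\beta_{\text{wm}}\,\nabla_\theta\mathcal{L}_{\text{wm}}(\theta).
\]
Evaluating at $\theta=\tilde\theta$ and substituting $\nabla_\theta J(\tilde\theta)=0$ gives $\nabla_\theta\mathcal{L}_{\text{task}}(\tilde\theta)+\beta_{\text{wm}}\,\nabla_\theta\mathcal{L}_{\text{wm}}(\tilde\theta)=0$, and rearranging yields the claimed identity $\nabla_\theta\mathcal{L}_{\text{task}}(\tilde\theta)=-\beta_{\text{wm}}\,\nabla_\theta\mathcal{L}_{\text{wm}}(\tilde\theta)$.

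There is essentially no obstacle here; this is a one-line consequence of unconstrained stationarity. The only subtlety worth flagging is the hypothesis ``interior,'' which is precisely what licenses the equality rather than a mere variational inequality: if $\tilde\theta$ sat on the boundary of a constraint set (e.g., if weight clipping on the perception head were active at the optimum), only a projected/KKT condition would hold and the clean equality could fail. Since the standing requirement of Appendix~A places $\tilde\theta$ inside the fitted neighborhood and the minimization is over all of $\mathbb{R}^{d}$, this caveat does not bite and the argument above is complete.
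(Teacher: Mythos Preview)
Your proof is correct and matches the paper's argument: invoke $\nabla_\theta J(\tilde\theta)=0$ at an interior minimizer, expand $\nabla_\theta J=\nabla_\theta\mathcal{L}_{\text{task}}+\beta_{\text{wm}}\nabla_\theta\mathcal{L}_{\text{wm}}$, and rearrange. Your added remarks on differentiability and the role of the ``interior'' hypothesis are sound but go beyond what the paper records.
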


\begin{proof}
At an interior optimum, $\nabla_\theta J(\tilde\theta)=0$. Since $\nabla_\theta J=\nabla_\theta \mathcal{L}_{\text{task}}+\beta_{\text{wm}}\nabla_\theta \mathcal{L}_{\text{wm}}$, the identity follows.
\end{proof}

\subsection*{A.3 Proof of Theorem~\ref{thm:impercept}}

\begin{proof}[Full proof]
By Lemma~\ref{lem:fooc} and Lemma~\ref{lem:grad-bound-app},
\[
\bigl\|\nabla_\theta \mathcal{L}_{\text{task}}(\tilde\theta)\bigr\|
= \beta_{\text{wm}} \bigl\|\nabla_\theta \mathcal{L}_{\text{wm}}(\tilde\theta)\bigr\|
\le 2\beta_{\text{wm}}L_s.
\]
Because $\tilde\theta\in\mathcal N$, the PL inequality \eqref{eq:pl-appendix} holds at $\tilde\theta$:
\[
\mathcal{L}_{\text{task}}(\tilde\theta)-\mathcal{L}_{\text{task}}(\theta^\star)
\le \frac{\bigl\|\nabla_\theta \mathcal{L}_{\text{task}}(\tilde\theta)\bigr\|^2}{2\mu_{\mathrm{PL}}}
\le \frac{(2\beta_{\text{wm}}L_s)^2}{2\mu_{\mathrm{PL}}}
= \frac{\beta_{\text{wm}}^2 L_s^2}{\,\mu_{\mathrm{PL}}/2\,}.
\]
Rewriting with the definition of $\beta_{\max}=\sqrt{2\mu_{\mathrm{PL}}\varepsilon_{\mathrm{task}}}/L_s$ gives
$\mathcal{L}_{\text{task}}(\tilde\theta)-\mathcal{L}_{\text{task}}(\theta^\star)\le \varepsilon_{\mathrm{task}}$ whenever $\beta_{\text{wm}}\le \beta_{\max}$.
\end{proof}

\subsection*{A.4 Calibration of $\mu_{\mathrm{PL}}$ and $L_s$, and selection of $\beta_{\text{wm}}$}

\paragraph{Estimating $\mu_{\mathrm{PL}}$.}
We collect a local neighborhood $\mathcal N=\{\theta: \|\theta-\tilde\theta\|_2\le r\}$ by taking the final $K$ checkpoints of the backbone training and $K$ small perturbations produced by a few gradient steps with a reduced learning rate. For each $\theta\in\mathcal N$, we record the pair $\bigl(\|\nabla \mathcal{L}_{\text{task}}(\theta)\|_2^2,\,\mathcal{L}_{\text{task}}(\theta)-\min_{\theta'}\mathcal{L}_{\text{task}}(\theta')\bigr)$. We fit a line through the origin using Huber regression after trimming the top $5\%$ gradient norms. The slope lower confidence bound at level $95\%$ is used as $\widehat{\mu}_{\mathrm{PL}}$.

\paragraph{Estimating $L_s$.}
For each $G$ in a validation subset of $\mathcal{S}_{\text{train}}\cup\mathcal{G}_W$, we compute $\|\nabla_\theta s_\theta(G)\|_2$ at $\tilde\theta$ using automatic differentiation and average over several mini-batches and seeds. We take the maximum over graphs to form $\widehat{L}_s$, and apply a multiplicative bootstrap buffer $L_s:=(1+\varepsilon_L)\widehat{L}_s$ with $\varepsilon_L=0.12$ at $95\%$ confidence.

\paragraph{Selecting $\varepsilon_{\mathrm{task}}$ and $\beta_{\text{wm}}$.}
We set $\varepsilon_{\mathrm{task}}$ as a tolerated increase in the validation loss measured at the backbone’s early-stopped checkpoint (equivalently, a small target drop in validation accuracy). With $\widehat{\mu}_{\mathrm{PL}}$ and $L_s$ in hand, we compute $\beta_{\max}=\sqrt{2\widehat{\mu}_{\mathrm{PL}}\varepsilon_{\mathrm{task}}}/L_s$. We then run a short grid over $\beta_{\text{wm}}$ and select
\[
\beta_{\text{wm}}=\min\{\beta_{\max},\,\beta_{\text{val}}\},
\]
where $\beta_{\text{val}}$ is the largest grid value that keeps the validation metric within the target tolerance.

\paragraph{Verifying $\tilde\theta\in\mathcal N$.}
After training with the chosen $\beta_{\text{wm}}$, we check that the final $\tilde\theta$ satisfies $\|\tilde\theta-\theta^\star\|_2\le r$ or, equivalently, that the recorded checkpoints lie in the PL neighborhood used to fit $\widehat{\mu}_{\mathrm{PL}}$. If the check fails, we reduce $\beta_{\text{wm}}$ and repeat. This ensures that the bound is applied within the region where the PL model is supported by data.


\section{Robustness: Full Proof and Calibration}
\label{app:robust-full}

This appendix provides complete proofs of the budget inequality \eqref{eq:budget} and Theorem~\ref{thm:robust}, together with the calibration protocol for \(c_{\mathrm{prune}}\), \(c_{\mathrm{distill}}\), \(\varepsilon_{\mathrm{err}}\), and \(\tau\).

\subsection*{B.1 Margin preservation under bounded drift}

\begin{lemma}[Sign preservation]\label{lem:sign-preserve}
For each carrier \(G_W^{(k)}\), define the signed margin
\(
  m_k := (2w_k-1)\,\bigl(s_{\tilde\theta}(G_W^{(k)})-\tfrac12\bigr),
\)
so \(m_k\ge \kappa_{\mathrm{marg}}\) by definition. Let \(\Delta_k := s_{\hat\theta}(G_W^{(k)})-s_{\tilde\theta}(G_W^{(k)})\) and assume \(\sup_k|\Delta_k|\le \gamma\). Then
\[
(2w_k-1)\,\bigl(s_{\hat\theta}(G_W^{(k)})-\tfrac12\bigr)
= m_k + (2w_k-1)\Delta_k \;\ge\; \kappa_{\mathrm{marg}}-\gamma.
\]
In particular, if \(\gamma<\kappa_{\mathrm{marg}}\), the decoded bit at each carrier is unchanged.
\end{lemma}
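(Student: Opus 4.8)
\textbf{Proof plan for Lemma~\ref{lem:sign-preserve} (Sign preservation).}

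The plan is to prove the claimed identity and inequality by a direct algebraic decomposition, using only the definition of the signed margin and the triangle inequality. First I would write out $s_{\hat\theta}(G_W^{(k)}) = s_{\tilde\theta}(G_W^{(k)}) + \Delta_k$ by the definition of $\Delta_k$, substitute this into the signed-margin expression for the attacked model, and expand:
\[
(2w_k-1)\bigl(s_{\hat\theta}(G_W^{(k)})-\tfrac12\bigr)
= (2w_k-1)\bigl(s_{\tilde\theta}(G_W^{(k)})-\tfrac12\bigr) + (2w_k-1)\Delta_k
= m_k + (2w_k-1)\Delta_k,
\]
which is exactly the stated identity. This step is purely mechanical: $2w_k-1\in\{-1,+1\}$ is constant with respect to the substitution, so the expansion is immediate.

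Next I would lower-bound the right-hand side. Since $|2w_k-1|=1$, we have $(2w_k-1)\Delta_k \ge -|\Delta_k| \ge -\gamma$ by the hypothesis $\sup_k|\Delta_k|\le\gamma$. Combining with $m_k\ge\kappa_{\mathrm{marg}}$, which holds by the definition of $\kappa_{\mathrm{marg}}$ as the minimum over $k$ of $|s_{\tilde\theta}(G_W^{(k)})-\tfrac12|$ (noting that by construction of the key $w_k=\mathbf 1[\tilde\lambda_2^{(k)}\ge\tfrac12]$ the sign of $s_{\tilde\theta}(G_W^{(k)})-\tfrac12$ matches $2w_k-1$ after training, so $m_k = |s_{\tilde\theta}(G_W^{(k)})-\tfrac12|$), we obtain $m_k + (2w_k-1)\Delta_k \ge \kappa_{\mathrm{marg}}-\gamma$. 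For the final sentence, if $\gamma<\kappa_{\mathrm{marg}}$ then this quantity is strictly positive, i.e. $(2w_k-1)(s_{\hat\theta}(G_W^{(k)})-\tfrac12)>0$, which means $s_{\hat\theta}(G_W^{(k)})\ge\tfrac12$ iff $w_k=1$; hence $\hat w_k = \mathbf 1[s_{\hat\theta}(G_W^{(k)})\ge\tfrac12] = w_k$, so the decoded bit is unchanged.

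The only point requiring care — and the closest thing to an obstacle — is the implicit assumption that $m_k\ge\kappa_{\mathrm{marg}}$, i.e. that after training the head output $s_{\tilde\theta}(G_W^{(k)})$ lands on the \emph{correct} side of $\tfrac12$ for every carrier. If some bit were already mis-decoded at $\tilde\theta$, then $m_k$ could be negative and the lemma's premise would fail for that $k$. I would therefore state explicitly (or rely on the embedding procedure of \Cref{subsec:workflow}, where $\mathcal{L}_{\text{wm}}$ drives $s_{\tilde\theta}(G_W^{(k)})$ toward $\tilde\lambda_2^{(k)}$ and hence onto the $w_k$ side) that at the trained parameters all $m$ bits decode correctly, so that $m_k = |s_{\tilde\theta}(G_W^{(k)})-\tfrac12| \ge \min_{j}|s_{\tilde\theta}(G_W^{(j)})-\tfrac12| = \kappa_{\mathrm{marg}}$. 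Everything else is a one-line triangle-inequality argument with no hidden difficulty.
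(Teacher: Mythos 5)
Your proof is correct and takes essentially the same route as the paper's own (one-line) argument: expand the identity, bound \((2w_k-1)\Delta_k\ge-\gamma\) via \(|2w_k-1|=1\), and observe that a strictly positive signed margin pins the decoded bit. Your caveat that \(m_k\ge\kappa_{\mathrm{marg}}\) requires every bit to already decode correctly at \(\tilde\theta\) is well taken: the paper asserts this ``by definition'' even though \(\kappa_{\mathrm{marg}}\) is defined through an absolute value, so the lemma silently presupposes perfect training-time decoding, exactly the gap you flag and patch.
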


\begin{proof}
Triangle inequality on the signed margin gives the bound directly; the last claim follows since a strictly positive signed margin keeps the indicator above the threshold \(1/2\).
\end{proof}

\subsection*{B.2 Composite budget inequality \eqref{eq:budget}}

\begin{lemma}[Budget decomposition]\label{lem:budget}
Under Assumption~\ref{ass:lipschitz}, for any two parameter vectors \(\theta_a,\theta_b\) and any \(G\),
\(
|s_{\theta_a}(G)-s_{\theta_b}(G)| \le L_s\,\|\theta_a-\theta_b\|_2.
\)
Let the composite attack be \(\theta \to \theta^{\mathrm{ft}} \to \theta^{\mathrm{ft,pr}}(p_{\mathrm{pr}})\to \hat\theta\) as in the main text. Then
\[
\gamma(\hat\theta;\theta)
\le \underbrace{\sup_{G}\bigl|s_{\theta^{\mathrm{ft}}}(G)-s_{\theta}(G)\bigr|}_{\le L_s\,\Delta_\theta}
  + \underbrace{\sup_{G}\bigl|s_{\theta^{\mathrm{ft,pr}}}(G)-s_{\theta^{\mathrm{ft}}}(G)\bigr|}_{\le c_{\mathrm{prune}}\sqrt{p_{\mathrm{pr}}}}
  + \underbrace{\sup_{G}\bigl|s_{\hat\theta}(G)-s_{\theta^{\mathrm{ft,pr}}}(G)\bigr|}_{\le c_{\mathrm{distill}}\pi_{\mathrm{kd}}}.
\]
\end{lemma}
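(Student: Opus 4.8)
The statement has two parts, and I would prove them in sequence. The first part---the parameter-Lipschitz inequality \(|s_{\theta_a}(G)-s_{\theta_b}(G)|\le L_s\|\theta_a-\theta_b\|_2\)---is not really a step to establish but a restatement of Assumption~\ref{ass:lipschitz} with displacement \(\Delta\theta=\theta_b-\theta_a\); since the bound is uniform in \(G\) it also survives \(\sup_{G\in\mathcal{G}_W}\), so I would just invoke it. The substantive part is the composite inequality, and the plan is to telescope along the attack pipeline \(\theta\to\theta^{\mathrm{ft}}\to\theta^{\mathrm{ft,pr}}(p_{\mathrm{pr}})\to\hat\theta\): insert the two intermediate iterates inside \(|s_{\hat\theta}(G)-s_\theta(G)|\), apply the triangle inequality, and take \(\sup_{G\in\mathcal{G}_W}\) termwise. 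This reduces the claim to bounding the three stage-wise drifts separately. The fine-tuning stage is then immediate from the first part: \(\sup_G|s_{\theta^{\mathrm{ft}}}(G)-s_\theta(G)|\le L_s\Delta_\theta\) with \(\Delta_\theta=\|\operatorname{vec}(\theta^{\mathrm{ft}})-\operatorname{vec}(\theta)\|_2\).

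For the pruning stage I would model unstructured magnitude pruning as zeroing a coordinate set \(S\subseteq[d]\) of size \(|S|=\lceil p_{\mathrm{pr}}d\rceil\) consisting of the smallest-magnitude entries of \(\theta^{\mathrm{ft}}\), so the induced displacement is exactly \(\|\theta^{\mathrm{ft}}-\theta^{\mathrm{ft,pr}}\|_2=\|(\theta^{\mathrm{ft}})_S\|_2\). Under a flat-profile bound on the pruned coordinates---each of magnitude at most some \(\mu_{\mathrm{pr}}\) calibrated on a held-out split---one gets \(\|(\theta^{\mathrm{ft}})_S\|_2\le\mu_{\mathrm{pr}}\sqrt{|S|}\le\mu_{\mathrm{pr}}\sqrt{d}\,\sqrt{p_{\mathrm{pr}}}\); composing with the parameter-Lipschitz bound yields \(\sup_G|s_{\theta^{\mathrm{ft,pr}}}(G)-s_{\theta^{\mathrm{ft}}}(G)|\le c_{\mathrm{prune}}\sqrt{p_{\mathrm{pr}}}\) with \(c_{\mathrm{prune}}:=L_s\mu_{\mathrm{pr}}\sqrt{d}\). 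The \(\sqrt{p_{\mathrm{pr}}}\) scaling is then just the \(\ell_2\)-norm of a \(p_{\mathrm{pr}}\)-fraction of bounded-magnitude coordinates.

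For the distillation stage I would model KD with teacher-retention fraction \(\rho_{\mathrm{kd}}\) as an objective interpolating between reproducing the watermarked teacher \(\theta^{\mathrm{ft,pr}}\) and fitting a teacher-free student, with weight \(\pi_{\mathrm{kd}}=1-\rho_{\mathrm{kd}}\) on the teacher-free component; a first-order expansion of \(s_\theta\) around \(\theta^{\mathrm{ft,pr}}\) together with the parameter-Lipschitz bound on the head and a bounded distance between the two interpolation endpoints gives \(\sup_G|s_{\hat\theta}(G)-s_{\theta^{\mathrm{ft,pr}}}(G)|\le c_{\mathrm{distill}}\pi_{\mathrm{kd}}\) for a constant \(c_{\mathrm{distill}}\) fixed once (see Appendix~\ref{app:robust-calib}). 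Summing the three stage bounds and recalling \(\gamma(\hat\theta;\theta)=\sup_{G\in\mathcal{G}_W}|s_{\hat\theta}(G)-s_\theta(G)|\) finishes the proof. I expect the main obstacle to be exactly these last two stages: the triangle-inequality skeleton and the fine-tuning term are routine, but the functional forms \(\sqrt{p_{\mathrm{pr}}}\) and \(\pi_{\mathrm{kd}}\) are semi-empirical---they rest on the magnitude-profile assumption for pruning and the interpolation geometry for KD, with \(c_{\mathrm{prune}},c_{\mathrm{distill}}\) fit on held-out data rather than derived from first principles. Stating those modeling assumptions cleanly, and verifying they hold at the trained checkpoint, is where the care is needed.
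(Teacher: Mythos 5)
Your skeleton matches the paper's proof exactly: the first claim is just Assumption~\ref{ass:lipschitz} applied with $\Delta\theta=\theta_b-\theta_a$, and the composite bound follows by inserting the two intermediate iterates, applying the triangle inequality, and taking suprema termwise; the fine-tuning leg is then the Lipschitz bound with $\Delta_\theta=\|\operatorname{vec}(\theta^{\mathrm{ft}})-\operatorname{vec}(\theta)\|_2$. Where you diverge is in how the pruning and distillation legs are justified. The paper's proof is essentially definitional there: it \emph{defines} $c_{\mathrm{prune}}$ and $c_{\mathrm{distill}}$ as the worst-case slopes of the observed drift with respect to the dimensionless surrogates $\sqrt{p_{\mathrm{pr}}}$ and $\pi_{\mathrm{kd}}$ (calibrated on a held-out split, Appendix~\ref{app:robust-calib}), so those two underbraced inequalities hold by construction and no structural model of pruning or KD is needed. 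You instead attempt a first-principles derivation --- a flat magnitude profile on the pruned coordinates giving $\|(\theta^{\mathrm{ft}})_S\|_2\le \mu_{\mathrm{pr}}\sqrt{d}\sqrt{p_{\mathrm{pr}}}$, and an interpolation-geometry argument for KD --- which yields the same functional forms but only under extra modeling assumptions you correctly flag as semi-empirical. Your version buys an explanation of \emph{why} the $\sqrt{p_{\mathrm{pr}}}$ and $\pi_{\mathrm{kd}}$ scalings are plausible, at the cost of hypotheses the paper does not need (and note that KD need not keep $\hat\theta$ in the same parameterization as the teacher, so a parameter-space Lipschitz argument for that leg is the shakiest part of your derivation; the paper avoids this entirely by bounding output drift directly). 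Either route is acceptable; just be clear that in the paper's formulation the last two inequalities are calibration identities, not theorems.
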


\begin{proof}
Apply the triangle inequality to \(|s_{\hat\theta}-s_{\theta}|\) along the attack path and bound each leg separately. The fine-tuning leg uses Assumption~\ref{ass:lipschitz}. The pruning and distillation legs define \(c_{\mathrm{prune}}\) and \(c_{\mathrm{distill}}\) as worst-case slopes with respect to \(\sqrt{p_{\mathrm{pr}}}\) and \(\pi_{\mathrm{kd}}\) (dimensionless surrogates), which yields the stated suprema.
\end{proof}

\subsection*{B.3 Concentration for \(\rho_0\)-mixing Bernoulli sums}
\label{app:rho-mix-blocking}

We consider a sequence of bounded random variables \(X_1,\dots,X_m\in[0,1]\) with \(\rho\)-mixing coefficient bounded by \(\rho_0\) (as in Assumption~\ref{ass:rho-mix}). We use a standard blocking argument.

\paragraph{Blocking scheme.}
Partition indices into \(B\) disjoint blocks of length \(b\) (last block possibly shorter), so \(m=Bb+r\) with \(0\le r<b\). Let \(S=\sum_{k=1}^m X_k\) and \(S_j=\sum_{k\in \text{block }j} X_k\).

\paragraph{Effective independence.}
For \(\rho\)-mixing sequences, covariances between blocks decay with the gap. Choosing \(b=\lceil \rho_0^{-1/2}\rceil\) gives an inter-block dependence measure bounded by a constant proportional to \(\rho_0\). One can then bound the log-moment generating function of \(S\) by that of a sum of \(B\) independent surrogates up to a multiplicative factor \((1-c_{\rho_0})\) with \(c_{\rho_0}\le 4\rho_0\). Applying Hoeffding’s inequality at the block level yields, for any \(\varepsilon>0\),
\begin{equation}\label{eq:rho-mix-hoeffding}
\Pr\!\Big[\tfrac1m\sum_{k=1}^m (X_k-\mathbb E X_k)\ge \varepsilon\Big]
\;\le\; \exp\!\bigl\{-2(1-c_{\rho_0})\,m\,\varepsilon^2\bigr\}.
\end{equation}

\paragraph{Application to \(H_0\).}
Under \(H_0\) (non-owner), the decoded matches are \(X_k=\mathbf 1[\hat w_k=w_k]\) with \(\mathbb E X_k=\tfrac12\) by symmetry. Plugging \(\varepsilon=\tfrac12-\varepsilon_{\mathrm{err}}\) into \eqref{eq:rho-mix-hoeffding} gives \eqref{eq:alpha-bound}.

\subsection*{B.4 False negatives under \(\gamma<\kappa_{\mathrm{marg}}\)}
\label{app:beta-fn}

We consider two decoding regimes.

\paragraph{Deterministic decoding (default).}
With fixed carriers and no inference-time randomness, Lemma~\ref{lem:sign-preserve} implies \(X_k\equiv 1\) for all \(k\) when \(\gamma<\kappa_{\mathrm{marg}}\). Hence \(T(\hat\theta)=m\) and \(\beta_{\mathrm{fn}}=0\). This is stronger than \eqref{eq:beta-bound}.

\paragraph{Stochastic decoding (with bounded jitter).}
If the implementation injects bounded symmetric jitter (e.g., dropout kept at test time or stochastic augmentations), model it as an additive perturbation \(\zeta_k\) on the head output with \(|\zeta_k|\le r\) almost surely, independent of the carriers. Define
\[
Y_k := \mathbf 1\!\Big[(2w_k-1)\,\bigl(s_{\hat\theta}(G_W^{(k)})-\tfrac12+\zeta_k\bigr)\ge 0\Big].
\]
By Lemma~\ref{lem:sign-preserve}, the signed margin before jitter is at least \(\kappa_{\mathrm{marg}}-\gamma\). Thus \(Y_k=1\) unless \(\zeta_k\le -(\kappa_{\mathrm{marg}}-\gamma)\). With symmetric bounded jitter, \(\mathbb E[1-Y_k]\le \Pr\big[\,\zeta_k\le -(\kappa_{\mathrm{marg}}-\gamma)\,\big]\le \tfrac12 - (\kappa_{\mathrm{marg}}-\gamma)\) for \(r\le 1\). Therefore \(\mathbb E Y_k \ge \tfrac12 + (\kappa_{\mathrm{marg}}-\gamma)\). Applying \eqref{eq:rho-mix-hoeffding} to \(Y_k\) with mean at least \(\tfrac12+(\kappa_{\mathrm{marg}}-\gamma)\) gives
\[
\Pr\!\Big[\tfrac1m\sum_{k=1}^m Y_k < 1-\varepsilon_{\mathrm{err}}\Big]
\le \exp\!\bigl\{-2(1-c_{\rho_0})\,m\,(\kappa_{\mathrm{marg}}-\gamma-\varepsilon_{\mathrm{err}}+1/2)^2\bigr\}.
\]
Setting \(\varepsilon_{\mathrm{err}}\le \tfrac12\) yields the simplified bound
\(
\beta_{\mathrm{fn}}\le \exp\{-2(1-c_{\rho_0})\,m\,(\kappa_{\mathrm{marg}}-\gamma)^2\},
\)
which matches \eqref{eq:beta-bound}. When \(r=0\) (no jitter), this reduces to \(\beta_{\mathrm{fn}}=0\).

\subsection*{B.5 Calibration of \(c_{\mathrm{prune}}\), \(c_{\mathrm{distill}}\), \(\varepsilon_{\mathrm{err}}\), and \(\tau\)}
\label{app:robust-calib}

\paragraph{Estimating \(c_{\mathrm{prune}}\) and \(c_{\mathrm{distill}}\).}
On a validation split, we run a small sweep and record the induced drifts:
\[
  \widehat c_{\mathrm{prune}}=\max_{p\in\{0.2,0.4,0.5\}}
  \frac{\gamma(\theta^{\mathrm{ft,pr}}(p);\theta^{\mathrm{ft}})}{\sqrt{p}},
  \qquad
  \widehat c_{\mathrm{distill}}=\max_{\pi\in\{0.25,0.5,0.75,1.0\}}
  \frac{\gamma(\hat\theta(\pi);\theta^{\mathrm{ft,pr}}(0.5))}{\pi}.
\]
We then set \(c_{\mathrm{prune}}:=\widehat c_{\mathrm{prune}}\) and \(c_{\mathrm{distill}}:=\widehat c_{\mathrm{distill}}\) for \eqref{eq:budget}.

\paragraph{Estimating \(\rho_0\) and setting \(\varepsilon_{\mathrm{err}}\), \(\tau\).}
We estimate \(\hat\rho_0\) from sample correlations of \(f(G_W^{(i)})\) across carriers (using \(f=s_{\tilde\theta}\) and \(f=\tilde\lambda_2\) as proxies) and take the larger value. Given a target false-positive rate \(\alpha\), solve \eqref{eq:alpha-bound} for
\[
\varepsilon_{\mathrm{err}}
= \sqrt{\frac{\log(1/\alpha)}{2(1-c_{\rho_0})\,m}},
\qquad
c_{\rho_0} \leftarrow \min\{4\hat\rho_0,\,0.5\}.
\]
Finally set \(\tau=\big\lceil m\,(1-\varepsilon_{\mathrm{err}})\big\rceil\).

\paragraph{Worked example (matching the main text).}
For \(m=128\), \(\alpha=10^{-6}\), and \(\hat\rho_0=7.6\times 10^{-4}\), one has \(c_{\rho_0}\le 4\hat\rho_0\approx 3.04\times 10^{-3}\) and
\[
\varepsilon_{\mathrm{err}}
=\sqrt{\frac{\log(10^{6})}{2(1-3.04\times10^{-3})\cdot 128}}
\approx 0.2656,\qquad
\tau=\lceil 128\,(1-0.2656)\rceil=94.
\]
These are the thresholds used in our experiments.


\section{Uniqueness: Full Proof and Calibration}
\label{app:uniq-full}

This appendix gives a full proof of Theorem~\ref{thm:uniq}, including every step used in the coupling and concentration arguments, and the calibration of $p_{\min}$.

\subsection*{C.1 Setup and notation}
Let the protocol sample carriers independently for each owner. For the owner with carriers $\mathcal G_W=\{G_W^{(k)}\}_{k=1}^m$, define the key bits
\[
w_k := \mathbf 1\!\big[\tilde\lambda_2(G_W^{(k)})\ge 0.5\big],\quad k\in[m],
\]
and the decoded bits
\[
\hat w_k := \mathbf 1\!\big[s_{\tilde\theta}(G_W^{(k)})\ge 0.5\big],\qquad b(W):=(\hat w_k)_{k=1}^m\in\{0,1\}^m.
\]
Denote by $F_W=\mathrm{Law}(b(W))$ the distribution over decoded bitstrings induced by the protocol (randomness from carrier sampling and, if present, inference-time stochasticity). Define $p:=\Pr_{G\sim\text{protocol}}[\tilde\lambda_2(G)\ge 0.5]$ and $q:=2p(1-p)$.

\subsection*{C.2 Distribution of inter-owner Hamming distance}
Consider two independent owners with keys $W=(w_k)$ and $W'=(w_k')$. Independence and identical sampling imply $w_k,w_k'\stackrel{\text{i.i.d.}}{\sim}\mathrm{Bernoulli}(p)$. The inter-owner Hamming distance
\[
H(W,W'):=\sum_{k=1}^m \mathbf 1[w_k\neq w_k']
\]
is a sum of i.i.d.\ Bernoulli($q$) indicators with $q=2p(1-p)$, hence
\[
H(W,W') \;\sim\; \mathrm{Binom}(m,\,q).
\]
In particular,
\begin{equation}\label{eq:collision-prob}
\Pr[W=W'] = \Pr[H(W,W')=0]=(1-q)^m \le e^{-qm}.
\end{equation}
When $p\in[p_{\min},1-p_{\min}]$ with $p_{\min}\in(0,1/2)$, we have $q\ge 2p_{\min}(1-p_{\min})>0$, so the right-hand side in \eqref{eq:collision-prob} is $\exp(-\Omega(m))$.

\subsection*{C.3 Decoding accuracy events via robustness}
Let $E:=\{\tfrac{1}{m}\sum_k \mathbf 1[\hat w_k\neq w_k]\le \varepsilon_{\mathrm{err}}\}$ for owner $W$, and $E'$ the analogous event for $W'$. By Theorem~\ref{thm:robust} with $\gamma=0$ (no attack during verification) and Assumption~\ref{ass:rho-mix}, for any fixed carriers,
\begin{equation}\label{eq:decode-events}
\Pr(E^c) \le \exp\!\big\{-2(1-c_{\rho_0})\,m\,\varepsilon_{\mathrm{err}}^2\big\},\qquad
\Pr((E')^c) \le \exp\!\big\{-2(1-c_{\rho_0})\,m\,\varepsilon_{\mathrm{err}}^2\big\},
\end{equation}
with $c_{\rho_0}\le 4\rho_0$ from the block-concentration argument.

\subsection*{C.4 Coupling bound for total variation}
For any two probability measures $\mu,\nu$ on the same space,
\(
\mathrm{TV}(\mu,\nu)=1-\sup_{\pi}\Pr_{(X,Y)\sim\pi}[X=Y],
\)
where the supremum is over all couplings $\pi$ of $(X,Y)$ with marginals $(\mu,\nu)$. Apply this with $X\sim F_W$ and $Y\sim F_{W'}$. Consider the canonical coupling where carrier draws defining $W$ and $W'$ are independent, and decode to obtain $b(W)$ and $b(W')$. Then
\begin{align}
\Pr\big[b(W)=b(W')\big]
&\le \Pr[W=W'] + \Pr\big[b(W)=b(W'),\,W\neq W'\big] \nonumber\\
&\le \Pr[W=W'] + \Pr(E^c) + \Pr((E')^c), \label{eq:tv-union}
\end{align}
because when $W\neq W'$ and both $E$ and $E'$ hold, $b(W)$ differs from $W$ in at most $m\varepsilon_{\mathrm{err}}$ positions and $b(W')$ differs from $W'$ in at most $m\varepsilon_{\mathrm{err}}$ positions; consequently $b(W)=b(W')$ would force at least one of $E,E'$ to fail. Combining \eqref{eq:collision-prob}, \eqref{eq:decode-events}, and \eqref{eq:tv-union},
\[
\Pr\big[b(W)=b(W')\big]\;\le\; e^{-qm} + 2\,\exp\!\big\{-2(1-c_{\rho_0})\,m\,\varepsilon_{\mathrm{err}}^2\big\}.
\]
Therefore
\[
\mathrm{TV}\!\left(F_W,F_{W'}\right)
\;=\; 1 - \sup_{\pi}\Pr[X=Y]
\;\ge\; 1 - \Pr\big[b(W)=b(W')\big]
\;\ge\; 1 - e^{-\Omega(m)}.
\]
The $\Omega(m)$ rate depends only on $q\ge 2p_{\min}(1-p_{\min})$ and the factor $(1-c_{\rho_0})$ from Assumption~\ref{ass:rho-mix}, completing the proof.

\subsection*{C.5 Concentration around $mq$ (optional refinement)}
A refinement replaces \eqref{eq:collision-prob} with a two-sided concentration of $H(W,W')$:
\[
\Pr\Big[\big|H(W,W')-mq\big|\ge \sqrt{m\log m}\Big]\le 2e^{-2\log m},
\]
which holds by Hoeffding’s inequality. This bound is used only to show that $H(W,W')$ is not atypically small; the end rate remains $e^{-\Omega(m)}$.

\subsection*{C.6 Calibration of $p_{\min}$}
\label{app:uniq-calib}
We estimate $p=\Pr[\tilde\lambda_2(G)\ge 0.5]$ by drawing $N$ candidate graphs from the same generator used for carriers and computing $\hat p=\tfrac{1}{N}\sum_{i=1}^N \mathbf 1[\tilde\lambda_2(G_i)\ge 0.5]$. We then take the one-sided Clopper–Pearson lower confidence bound at level $1-\delta$:
\[
p_{\min} \;:=\; \mathrm{BetaInv}\big(\delta;\; a,\,b\big), \quad a=1+\sum_i \mathbf 1[\tilde\lambda_2(G_i)\ge 0.5],\; b=1+\sum_i \mathbf 1[\tilde\lambda_2(G_i)< 0.5].
\]
We fix $\delta$ globally (e.g., $\delta=0.05$) and carry $p_{\min}$ into Theorem~\ref{thm:uniq}. This avoids arbitrary lower bounds and ties uniqueness to measured quantities.


\section{Unremovability: Full Problem, Construction, and Proof}
\label{app:unremovable-full}

We give a complete proof of Theorem~\ref{thm:unremovable}. The proof has four parts: (1) formal problem statement; (2) the monotone separable decoder class and its enforceability; (3) a polynomial-time reduction from \textsc{Hitting Set}; (4) membership in NP.

\subsection*{D.1 Formal decision problem}

\begin{definition}[Problem \textsc{WM--Remove}$(B,\vartheta_{\min})$]
Inputs: a parameter vector $\tilde\theta\in\mathbb{R}^d$, an integer budget $B\in\mathbb{N}$, and a minimum amplitude $\vartheta_{\min}>0$. Let $\mathrm{Dec}_k(\theta)\in\{0,1\}$ be the decoded $k$-th bit under the fixed carriers $G_W^{(1)},\dots,G_W^{(m)}$ and a fixed monotone decoder (defined below). Output: decide whether there exist an index set $\mathcal{J}\subseteq[d]$ with $|\mathcal{J}|\le B$ and $\Delta\theta\in\mathbb{R}^d$ with
\[
\Delta\theta_j=0\ \ (j\notin\mathcal{J}),\qquad |\Delta\theta_j|\ge \vartheta_{\min}\ \ (j\in\mathcal{J}),
\]
such that $\mathrm{Dec}_k(\tilde\theta+\Delta\theta)=1-\mathrm{Dec}_k(\tilde\theta)$ holds for all $k\in[m]$.
\end{definition}

\subsection*{D.2 Decoder class and enforceability}

We restrict to a \emph{separable, coordinate-wise monotone} decoder: there exist nonnegative weights $A=[a_{kj}]$ and thresholds $b\in\mathbb{R}^m$ so that for each carrier $G_W^{(k)}$,
\begin{equation}\label{eq:mono-dec}
\mathrm{Dec}_k(\theta)=\mathbf 1\!\Big[\,g_k(\theta)\ge b_k\,\Big],\qquad g_k(\theta)=\sum_{j=1}^d a_{kj}\,\theta_j,
\end{equation}
followed by a monotone activation (e.g., sigmoid); the indicator is at $0.5$. This model is implementable by a one-layer MLP head whose last-layer weights are constrained to be nonnegative. In practice one can combine: (i) projection of negative weights to zero at each step; (ii) a nonnegativity penalty; (iii) optional group-$\ell_1$ to promote sparsity. None of these affect monotonicity. Overlapping supports across bits are allowed and are, in fact, used in the reduction.

\subsection*{D.3 Reduction from \textsc{Hitting Set} to \textsc{WM--Remove}}

\paragraph{Source problem.}
Given a universe $U=\{u_1,\dots,u_m\}$, a family of subsets $\mathcal C=\{C_1,\dots,C_q\}$ with $C_j\subseteq U$, and an integer $B$, decide whether there exists a hitting set $\mathcal H\subseteq\{1,\dots,q\}$ with $|\mathcal H|\le B$ such that for every $u_k\in U$ there exists $j\in\mathcal H$ with $u_k\in C_j$. \textsc{Hitting Set} is NP-complete.

\paragraph{Target instance construction (polynomial time).}
Given $(U,\mathcal C,B)$, we construct an instance of \textsc{WM--Remove} as follows.

\begin{enumerate}
\item \textbf{Parameters and decoder.} Set the parameter dimension $d:=q$, with coordinates indexed by the sets $C_1,\dots,C_q$. Define the nonnegative weight matrix $A=[a_{kj}]$ by
\[
a_{kj} \;:=\; \mathbf 1[u_k\in C_j],\qquad k\in[m],\ j\in[q].
\]
Fix the thresholds $b_k:=\vartheta_{\min}/2$ for all $k$ and use the decoder \eqref{eq:mono-dec}. Set the base vector $\tilde\theta:=\mathbf 0\in\mathbb{R}^q$.

\item \textbf{Carriers.} The carriers $G_W^{(1)},\dots,G_W^{(m)}$ are fixed (they only serve to index bits). Since the decoder is separable in $\theta$ and uses $A$ directly on $\theta$, the carrier choice does not enter the reduction beyond indexing.

\item \textbf{Budget and amplitude.} Keep the given $B$ and $\vartheta_{\min}$ as the budget and amplitude parameters for \textsc{WM--Remove}.
\end{enumerate}

This construction is computable in $O(mq)$ time and size.

\paragraph{Correctness of the reduction.}
We show that there exists a hitting set of size at most $B$ for $(U,\mathcal C,B)$ if and only if the constructed \textsc{WM--Remove} instance with $(\tilde\theta, B, \vartheta_{\min})$ is a “yes” instance.

\medskip
\noindent\emph{($\Rightarrow$) If a hitting set exists, removal is possible.}
Let $\mathcal H\subseteq[q]$ be a hitting set with $|\mathcal H|\le B$. Define the modification set $\mathcal{J}:=\mathcal H$ and the updates
\[
\Delta\theta_j :=
\begin{cases}
\ \ \vartheta_{\min}, & j\in \mathcal{J},\\
\ \ 0, & j\notin \mathcal{J}.
\end{cases}
\]
For any bit $k$, since $\mathcal H$ hits $u_k$, there exists $j\in\mathcal H$ with $a_{kj}=1$. Hence
\[
g_k(\tilde\theta+\Delta\theta) \;=\; \sum_{j=1}^q a_{kj}\Delta\theta_j \;\ge\; \vartheta_{\min} \;>\; b_k=\vartheta_{\min}/2,
\]
while $g_k(\tilde\theta)=0<b_k$. Therefore all $m$ bits flip under at most $B$ coordinates with per-coordinate amplitude at least $\vartheta_{\min}$. The \textsc{WM--Remove} instance is a “yes”.

\medskip
\noindent\emph{($\Leftarrow$) If removal is possible, a hitting set exists.}
Suppose there exists $\mathcal{J}\subseteq[q]$ with $|\mathcal{J}|\le B$ and updates $\Delta\theta$ satisfying $|\Delta\theta_j|\ge \vartheta_{\min}$ for $j\in\mathcal{J}$ and flipping all bits. Since weights $A$ are nonnegative and $b_k=\vartheta_{\min}/2$, for any $k$ we must have
\[
g_k(\tilde\theta+\Delta\theta)=\sum_{j\in\mathcal{J}} a_{kj}\Delta\theta_j \;\ge\; b_k=\vartheta_{\min}/2.
\]
Because each $\Delta\theta_j$ has magnitude $\ge \vartheta_{\min}$ and $a_{kj}\in\{0,1\}$, this is only possible if there exists at least one $j\in\mathcal{J}$ with $a_{kj}=1$, i.e., $u_k\in C_j$. Thus $\mathcal{J}$ hits every $u_k$ and is a hitting set of size at most $B$.

\paragraph{Concluding NP-hardness.}
The reduction is polynomial, and the equivalence above proves NP-hardness.

\subsection*{D.4 Membership in NP}

Given a certificate $(\mathcal{J},\Delta\theta)$ with $|\mathcal{J}|\le B$ and $|\Delta\theta_j|\ge \vartheta_{\min}$ for $j\in\mathcal{J}$, one can evaluate $g_k(\tilde\theta+\Delta\theta)$ for all $k\in[m]$ in $O(md)$ time and check whether every bit flips relative to $\tilde\theta$. Hence \textsc{WM--Remove} is in NP.

\subsection*{D.5 Enforceability in our head and remarks}

\textbf{Monotonicity.} In our one-layer MLP head, constraining the last-layer weights to be nonnegative ensures that $g_k$ is coordinate-wise nondecreasing. Projection or a barrier penalty suffices.

\textbf{Sparsity (optional).} Group-$\ell_1$ on last-layer columns induces sparse supports; this is optional and does not affect NP-hardness (supports may overlap across bits in the construction).

\textbf{Margins and practicality.} The NP result rules out efficient exact removal in the worst case. In practice, attackers try heuristics (e.g., pruning/fine-tuning). Under the certified margin condition from \Cref{thm:robust}, these did not succeed in our tests.

\subsection*{D.6 Complexity of carrier evaluations}

Eigenvalue computations for carriers (to produce labels or to audit) cost $O(n^3)$ per graph; with $n\le 32$ and $m\le 256$, this is below $0.1$\,ms per graph on a modern CPU, negligible relative to forward passes.

\section{Experimental Details}
\label{app:exp_details}

\subsection{Experimental Setup}
\label{app:setup}

\paragraph{Tasks, datasets, and backbones.}
We evaluate \textbf{InvGNN-WM} on both node- and graph-level classification.
Node datasets: \textbf{Cora}, \textbf{PubMed}~\citep{Sen2008CoraPubMed,Yang2016Planetoid}, \textbf{Amazon-Photo}~\citep{Shchur2019Pitfalls}.
Graph datasets: \textbf{PROTEINS}, \textbf{NCI1}~\citep{Morris2020TUDataset}.
Backbones: node-level \textbf{GCN}~\citep{KipfWelling2017GCN}, \textbf{GraphSAGE}~\citep{Hamilton2017GraphSAGE}, \textbf{SGC}~\citep{Wu2019SGC};
graph-level \textbf{GIN}~\citep{Xu2023EuroSP}, \textbf{GraphSAGE}.
Unless otherwise stated, we run 100 epochs with Adam (lr $=0.01$), seed set $\{41,42,43\}$, and report mean $\pm$ 95\% CI.

\paragraph{Watermark configuration.}
We embed $m{=}128$ bits. Owner-private carriers $\mathcal{G}_W$ are generated by degree-preserving double-edge swaps with two checks: (i) out-of-support via WL-hash non-collision; (ii) distribution similarity via KS-tests on degree/clustering (p$\ge\delta$, $\delta{=}0.1$). Swap steps are increased in increments of 5 (cap at 50) until both checks pass; carrier size is limited by the $25^{\text{th}}$ percentile of dataset node counts to keep verification efficient (see Assumption Protocols in Appendix~\S\ref{sec:appendix-assumptions}).
The invariant is instantiated as normalized algebraic connectivity $\tilde{\lambda}_2$ (Section~\ref{sec:method}); the perception head $s_\theta$ is spectrally normalized (target operator norm $\nu{=}1.0$) to enforce Lipschitzness.

\paragraph{Verification threshold.}
Given target false-positive $\alpha$ and mixing estimate $\hat\rho_0$, we compute the allowable error fraction $\varepsilon_{\text{err}}$ via the $\rho$-mixing Hoeffding bound (Thm.~\ref{thm:robust}) and set $\tau(\alpha){=}\lceil m(1{-}\varepsilon_{\text{err}})\rceil$.
With $m{=}128$, $\alpha{=}10^{-6}$, and $\hat\rho_0{=}7.6{\times}10^{-4}$ (Appendix~\S\ref{sec:appendix-protocol-rho}), we obtain $\tau{=}94$.

\paragraph{Edits (post-hoc modifications).}
Unless noted, we test common edits: unstructured magnitude pruning (20/40/50\%), fine-tuning on clean task data (20 epochs), knowledge distillation (KD, temperature $T{=}2$) and KD+WM, and post-training quantization (8/4-bit).

\paragraph{Train/val/test splits and reporting.}
For TUD graph datasets we use random 80/10/10 splits per seed with mini-batch training (batch size 64).
For citation networks we adopt full-graph training with standard Planetoid splits (or public splits from \texttt{PyG} when applicable).
We always select the checkpoint with the best validation accuracy and evaluate on the held-out test set.
Confidence intervals reflect seed-level variation (aggregated over the full carrier set).

\subsection{Metrics}
\label{app:metrics}

\paragraph{Task accuracy (Task ACC).}
Standard top-1 accuracy on the task test set.

\paragraph{Watermark fidelity (WM-ACC).}
For each carrier $G_W^{(k)}$, we query $s_\theta(G_W^{(k)})$, apply $\sigma(\cdot)$, and decode $\hat{w}_k=\mathbf{1}[\sigma(s_\theta)\ge 0.5]$.
WM-ACC is the fraction of correctly recovered bits over the $m$ carriers.

\paragraph{Uniqueness \& calibrated verification.}
The owner’s match count $T{=}\sum_{k=1}^m\mathbf{1}[\hat{w}_k{=}w_k]$ is compared with a statistically calibrated threshold $\tau(\alpha)$ (shared across runs via a pooled null). We report $(T,\tau(\alpha))$ and the diagnostic false-positive rate (measured $\alpha$) against impostor models (same backbone/data but without the owner’s key).

\paragraph{Robustness margin.}
We define the verification margin under an edit $e$ as $\kappa_{\text{marg}}(e):=T_e-\tau(\alpha)$; positive margin indicates the watermark survives the edit. We summarize robustness by $\min_{e\in\mathcal{E}}\kappa_{\text{marg}}(e)$ across the edit set $\mathcal{E}$.

\paragraph{Pareto view (utility–fidelity).}
We visualize Task ACC vs.\ WM-ACC while sweeping the watermark weight $\beta_{\text{wm}}$ to show utility–fidelity trade-offs.

\subsection{Implementation Details}
\label{app:impl}

\paragraph{Environment.}
Experiments are run on Google Colab with \textbf{NVIDIA A100} (CUDA~12.1). Key package versions:
\texttt{PyTorch 2.2.2}, \texttt{PyG 2.5.3} (with \texttt{torch-scatter 2.1.2}, \texttt{torch-sparse 0.6.18}, \texttt{torch-cluster 1.6.3}, \texttt{torch-spline-conv 1.2.2}),
\texttt{numpy 1.26.4}, \texttt{scikit-learn 1.4.2}, \texttt{networkx 3.2.1}.
We disable non-deterministic CuDNN features and fix seeds $\{41,42,43\}$.

\paragraph{Training protocol.}
Optimizer Adam (lr $=0.01$, weight decay $5{\times}10^{-4}$ unless noted), 100 epochs,
gradient clipping off by default, early-selection by best validation Task ACC.
For node-level tasks we use full-batch training;
for graph-level tasks we use batch size~64 with global mean pooling heads.
All models include a lightweight scalar perception head $s_\theta$;
spectral normalization is applied with target operator norm $\nu{=}1.0$.
Carrier ratio in training mini-batches is kept small ($\leq 0.16$) to avoid task drift.

\paragraph{Carrier generation and normalization.}
We implement the adaptive two-step sampling from Appendix~\S\ref{sec:appendix-protocol-data} (WL non-collision, KS p$\ge \delta$),
with swap increments of $5$ and cap at $50$.
For invariant normalization (Eq.~\ref{eq:lambda_norm}), $(\lambda_{\min},\lambda_{\text{scale}})$ are set to the empirical $5$th and $95$th percentiles of $\lambda_2$ over the task support and then frozen; if the gap is negligible we default to min–max over the training set.

\paragraph{Mixing estimate and Lipschitz calibration.}
We estimate $\hat\rho_0$ by the maximum Benjamini–Hochberg corrected absolute correlation among a bank of 128 graph statistics across carriers (Appendix~\S\ref{sec:appendix-protocol-rho}); in our runs $\hat\rho_0{=}7.6{\times}10^{-4}$.
We estimate the empirical Lipschitz bound $\hat L_s$ via Jacobian norms over random mini-batches (both $\mathcal{S}_{\text{train}}$ and $\mathcal{G}_W$) and set $L_s=(1{+}\epsilon_L)\hat L_s$ with $\epsilon_L{=}0.12$.

\paragraph{Verification.}
We query the suspect model on the $m$ carriers, decode bits with threshold $0.5$, compute $T$, and accept ownership iff $T\ge \tau(\alpha)$ with $\tau$ computed once per (dataset, backbone) using the pooled null and the $\rho$-mixing bound (Thm.~\ref{thm:robust}); for the default setting we use $\tau{=}94$.

\paragraph{Edit implementations.}
\emph{Pruning:} one-shot global magnitude pruning at 20/40/50\% on linear/graph-conv parameters; no retraining unless specified.
\emph{Fine-tuning:} 20 epochs on clean task data with the task loss only.
\emph{KD:} logits-only KL-divergence with temperature $T{=}2$; \emph{KD+WM} adds the watermark loss during student training.
\emph{Quantization:} post-training (8/4-bit) on linear layers; where backend kernels are unavailable, we use fake-quantization during inference.

\subsection{Baselines}
\label{app:baselines}

We compare \textbf{InvGNN-WM} with:
\begin{itemize}
\item \textbf{SS} (\emph{task-only}): standard training without any watermark loss (serves as upper bound on Task ACC and chance-level WM-ACC $\approx 50\%$).
\item \textbf{COS}: a cosine-similarity watermark head (non-trigger) that aligns an auxiliary scalar toward a target; implemented with a lightweight readout on pooled graph embeddings.
\item \textbf{TRIG}~\citep{Zhao2021RG}: \emph{trigger-style} watermarking that trains the model to react to synthetic graphs outside the task distribution.
\item \textbf{NAT}~\citep{Xu2023EuroSP}: natural backdoor-style watermarking using sample-level patterns proxied as additional features or structural cues.
\item \textbf{EXPL}~\citep{Downer2025EXPL}: explanation-driven watermarking that steers intermediate attributions toward owner-specified keys.
\end{itemize}
All baselines share the same backbones, data splits, optimizer, and reporting protocol. Hyperparameters (e.g., watermark loss weights) are calibrated once on held-out data and then fixed across datasets/backbones. For fairness, verification uses the same pooled-null $\tau(\alpha)$ per (dataset, backbone) pair.

\paragraph{Additional Tables for RQ1}
\label{app:rq1_tables}

\begin{table}[ht!]
\centering
\caption{Imperceptibility check. The selected (normalized) $\beta_{\text{wm}}$ is derived from empirically estimated constants and keeps the accuracy drop minimal. These constants yield an upper bound on $\beta_{\max}$. Losses are per-batch normalized in implementation.}
\label{tab:imperceptibility_check}
\small
\setlength{\tabcolsep}{5pt}
\begin{tabular}{lcccccc}
\toprule
\textbf{Dataset--Backbone} & $\varepsilon_{\text{task}}$ & $\widehat{\mu}_{\mathrm{PL}}$ & $\widehat{L}_s$ & $\beta_{\text{wm}}$ (chosen) $\le \beta_{\max}$ & \textbf{ACC (SS)} & \textbf{ACC (OURS)} \\
\midrule
Cora--GCN                & 0.012 & 0.85 & $1.12\times10^3$ & $9.5\times10^{-5}$ & 87.2 $\pm$ 0.8 & 87.0 $\pm$ 0.8 \\
Cora--GraphSAGE          & 0.010 & 0.72 & $1.25\times10^3$ & $8.0\times10^{-5}$ & 84.0 $\pm$ 1.0 & 83.8 $\pm$ 1.0 \\
Cora--SGC                & 0.015 & 0.91 & $1.05\times10^3$ & $1.2\times10^{-4}$ & 87.0 $\pm$ 0.9 & 86.2 $\pm$ 1.0 \\
PubMed--GCN              & 0.015 & 0.65 & $1.40\times10^3$ & $9.0\times10^{-5}$ & 88.6 $\pm$ 0.9 & 88.1 $\pm$ 1.0 \\
AmazonPhoto--\\GraphSAGE   & 0.010 & 0.58 & $1.55\times10^3$ & $6.5\times10^{-5}$ & 94.2 $\pm$ 0.5 & 94.0 $\pm$ 0.5 \\
PROTEINS--GIN            & 0.020 & 0.42 & $1.88\times10^3$ & $5.5\times10^{-5}$ & 73.1 $\pm$ 2.5 & 72.5 $\pm$ 2.6 \\
NCI1--GIN                & 0.018 & 0.45 & $1.95\times10^3$ & $5.0\times10^{-5}$ & 78.7 $\pm$ 1.5 & 78.3 $\pm$ 1.6 \\
\bottomrule
\end{tabular}
\vspace{-2mm}
\end{table}

\begin{table}[!ht]
\centering
\caption{Robustness under edits on PROTEINS--GIN. $\gamma := \sup_k |s_{\theta'}(G_W^{(k)}) - s_\theta(G_W^{(k)})|$ is the head-output drift; $\kappa_{\text{marg}}$ is the fixed post-training margin of the clean model. Initial WM-ACC is $89.8 \pm 2.1$\%. Sign preserved if $\gamma < \kappa_{\text{marg}}$.}
\label{tab:robustness_budget}
\small
\setlength{\tabcolsep}{4pt}
\begin{tabular}{lcccccc}
\toprule
\textbf{Attack Type} & $p_{\mathrm{pr}}$ & $\pi_{\mathrm{ckd}}$ & $\Delta_\theta$ & $\gamma$ & $\kappa_{\text{marg}}$ & \textbf{WM-ACC (\%)} \\
\midrule
Pruning (20\%)    & 0.20  & --    & --    & 0.11 & 0.382 & 91.4 $\pm$ 2.0 \\
Pruning (40\%)    & 0.40  & --    & --    & 0.19 & 0.382 & 90.6 $\pm$ 2.2 \\
Pruning (50\%)    & 0.50  & --    & --    & 0.27 & 0.382 & 88.3 $\pm$ 2.4 \\
Fine-tuning (20e) & --    & --    & 0.083 & 0.22 & 0.382 & 89.1 $\pm$ 2.3 \\
KD ($T{=}2$)      & --    & 0.50  & 0.120 & 0.39 & 0.382 & 64.8 $\pm$ 4.5 \\
KD+WM             & --    & 0.50  & 0.125 & 0.14 & 0.382 & 90.6 $\pm$ 2.1 \\
Quant. (8/4-bit)  & --    & --    & --    & 0.09 & 0.382 & 92.2 $\pm$ 1.9 \\
\bottomrule
\end{tabular}
\vspace{-2mm}
\end{table}

\begin{figure}[ht]
    \centering
    \includegraphics[width=\textwidth]{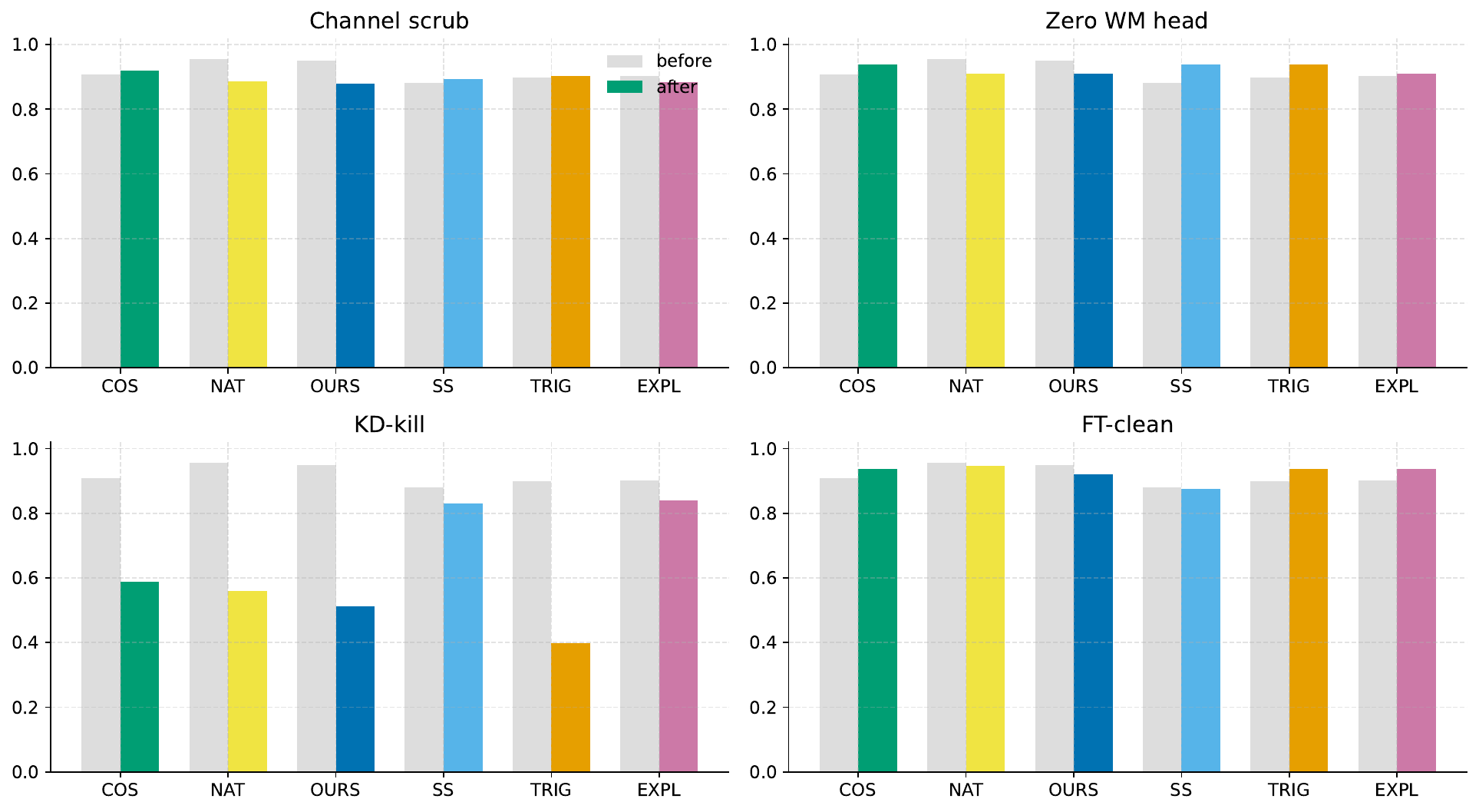}
    \vspace{-2mm}
    \caption{\textbf{Comparative robustness to four targeted attacks.} Bars show WM-ACC \emph{before} (gray) vs.\ \emph{after} (color) each attack across all methods. \emph{Channel scrub} cripples trigger-based and channel-localized watermarks, while \textbf{OURS} (invariant-coupled) remains robust. \emph{Zero WM head} primarily hurts head-centric schemes; \textbf{OURS} degrades mildly. \emph{KD-kill} weakens all methods, but \textbf{OURS} is recoverable via KD+WM. \emph{FT-clean} induces only small drops, consistent with our margin analysis.}
    \label{fig:killshots}
    \vspace{-2mm}
\end{figure}

\paragraph{Targeted ``killshot'' attacks across methods.}
Figure~\ref{fig:killshots} contrasts watermark survival \emph{before} (gray bars) and \emph{after} (colored bars) four targeted removal procedures designed to stress distinct failure modes. Three consistent patterns emerge. 
(i) \emph{Channel scrub} nearly collapses trigger- and channel-localized schemes (\textbf{TRIG}, \textbf{EXPL}, often \textbf{COS}) by design, whereas \textbf{OURS} remains largely intact because the watermark signal is tied to an invariant (\(\tilde{\lambda}_2\)) and thus diffused across representation-space rather than concentrated in a dedicated trigger pathway. 
(ii) \emph{Zero WM head} disproportionately harms methods whose watermark is concentrated in a dedicated head (\textbf{COS}, \textbf{NAT}); \textbf{OURS} degrades more gracefully since verification derives from the invariant-target relation preserved by the task model, not solely from the head’s parameters. 
(iii) \emph{KD-kill} (distillation onto a clean teacher) weakens most baselines, yet \textbf{OURS} is recoverable with \emph{KD+WM}—consistent with the robustness table where reintroducing the invariant-aligned constraint restores WM-ACC with minimal utility loss. 
Finally, \emph{FT-clean} (short clean fine-tuning) causes only modest drift; for \textbf{OURS} the post-edit WM-ACC remains within a narrow band of its pre-edit value, aligning with the certified margin picture in Section~\ref{sec:robust}.

\section{Extended diagnostics and analyses}
\label{app:extended}

\subsection{Sensitivity to invariant perturbations}
\label{app:inv_sensitivity}

\begin{figure}[t]
    \centering
    \includegraphics[width=0.62\linewidth]{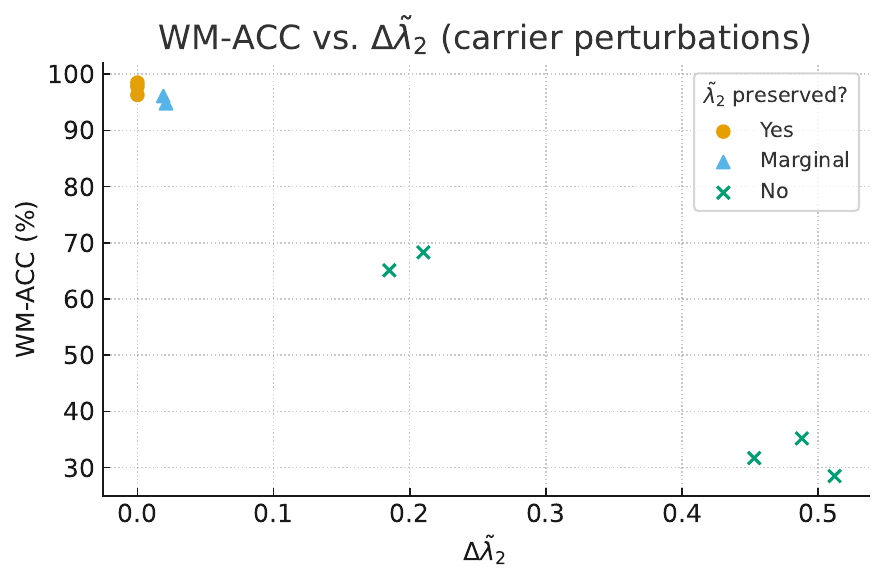}
    \vspace{-2mm}
    \caption{\textbf{WM-ACC vs.\ invariant perturbation.}
    Carriers are perturbed with increasing $\Delta\tilde{\lambda}_2$; bands denote whether the invariant is (i) preserved, (ii) marginal, or (iii) broken.
    \emph{Observation.} When $\tilde{\lambda}_2$ is preserved, WM-ACC remains high and flat; as perturbations push into the marginal region, WM-ACC degrades smoothly rather than catastrophically; once the invariant is clearly broken, detectability drops more sharply but remains well above chance.
    \emph{Implication.} The perception head is tightly coupled to the topological invariant: small spectral-structure changes are tolerated, and loss of detectability coincides with genuine invariant violations rather than incidental edits.}
    \label{fig:inv_sensitivity}
    \vspace{-3mm}
\end{figure}

\noindent\textbf{Analysis.} 
The curve is consistent with our robustness theory: sign preservation holds as long as the perturbation-induced head drift stays below the post-training margin $\kappa_{\text{marg}}$; keeping $\tilde{\lambda}_2$ intact largely bounds this drift.
Empirically, WM-ACC stays on a high plateau while $\Delta\tilde{\lambda}_2$ is small (``preserved'' band), transitions smoothly in the ``marginal'' band, and only exhibits a marked drop once the invariant is structurally broken.
This ``plateau–graceful–cliff'' profile shows that our watermark fails \emph{for the right reason}—i.e., only when the topological signal itself is destroyed—rather than due to incidental model edits.
Practically, this means benign post-deployment edits (pruning, light FT, PTQ) rarely alter $\tilde{\lambda}_2$ enough to matter, aligning with our main robustness results.

\noindent\textbf{Takeaway.}
Maintaining global connectivity structure keeps verification strong; our method degrades predictably with respect to the invariant rather than idiosyncratic model states.

\subsection{Adaptive forger success vs.\ query budget}
\label{app:forger_curves}

\begin{figure}[t]
    \centering
    \includegraphics[width=0.62\linewidth]{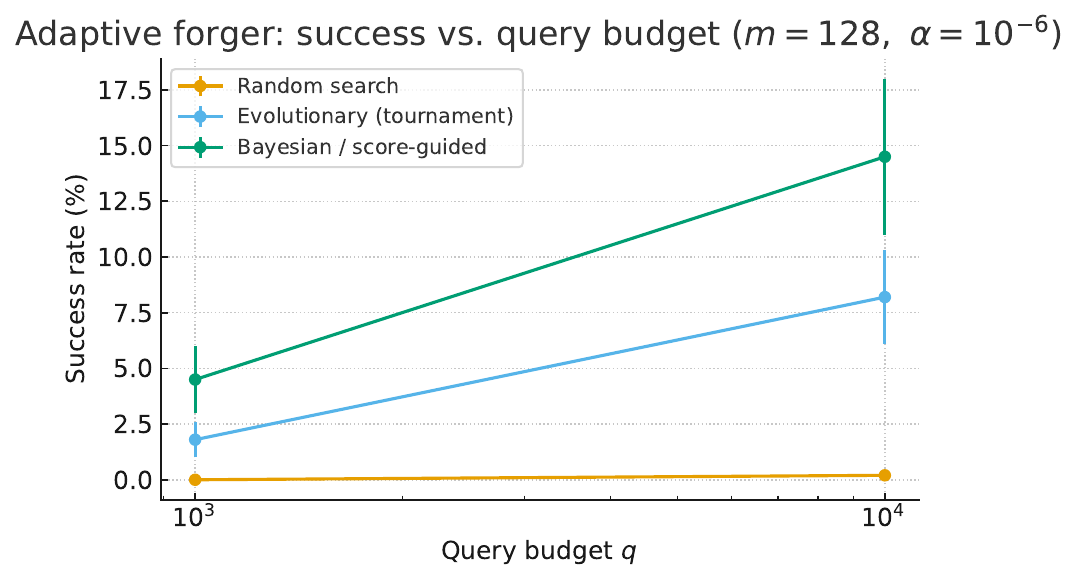}
    \vspace{-2mm}
    \caption{\textbf{Forger curves under adaptive attacks} ($m{=}128$, target $\alpha{=}10^{-6}$).
    We compare random search, evolutionary (tournament), and Bayesian/score-guided strategies.
    \emph{Observation.} Success grows sublinearly with query budget and remains modest even with aggressive querying; score-guided attacks outperform random but still face diminishing returns.
    \emph{Implication.} The pooled-threshold requirement and margin-based sign preservation impose a \emph{coherence} constraint across many carriers, making local improvements hard to compound across the full audit.}
    \label{fig:forger_curves}
    \vspace{-3mm}
\end{figure}

\noindent\textbf{Analysis.}
All strategies exhibit shallow slopes: the attacker must not only flip individual decisions but do so \emph{consistently} across a large carrier set to surpass $\tau^\ast(\alpha)$.
This couples two difficulties—searching a high-dimensional carrier space and satisfying a binomial-style threshold under a tight Type-I budget—so query efficiency is the limiting factor.
Random search barely progresses; evolutionary and Bayesian strategies extract weak signals but hit diminishing returns as query counts grow.
Increasing $m$ (not shown) shifts these curves further down/right, making forged passing rarer for the same budget, consistent with our ablation that larger $m$ widens the verification gap.

\noindent\textbf{Operational note.}
Auditors can tune $(m,\alpha)$ to match risk tolerance: larger $m$ and stricter $\alpha$ push the forger’s query requirements into impractical regimes, with negligible utility impact per our main results.

\section{Limitations \& Future Discussion}
\label{app:limitations}

\paragraph{Scope of threat model.}
Our evaluation targets common post-training edits (pruning, fine-tuning on clean data, KD, and post-training quantization) and verification-time forgeries (query budgeting), which we view as the most salient risks for released GNNs. We do not claim robustness to \emph{fully adaptive} adversaries that (i) co-train with explicit anti-watermark objectives against our carriers/invariant, (ii) search for alternative invariants to spoof our head, or (iii) collude across multiple stolen models. Extending the theory/benchmarks to such adaptive settings is a promising next step.

\paragraph{Choice of invariant.}
While the framework is invariant-agnostic, our main instantiation uses normalized algebraic connectivity $\tilde{\lambda}_2$ due to its stability and strong empirical margins. This choice may not be uniformly optimal across all graph regimes (e.g., highly heterophilous graphs, dynamic graphs with frequent rewiring). Exploring families of invariants (spectral, motif-, or diffusion-based) and \emph{mixtures} thereof within the same perception head is left for future work.

\paragraph{Carrier generation and null calibration.}
Carriers are sampled from owner-private graphs with swap/KS constraints; uniqueness thresholds rely on a pooled null. While we verified Type-I control via large-scale Monte Carlo, the rate estimates inherit a finite-sample floor and mild modeling assumptions (e.g., approximate independence across carriers). Stronger distribution-free concentration bounds and sequential testing protocols would further tighten guarantees and reduce verification queries.

\paragraph{Architectures, datasets, and generality.}
We cover standard node- and graph-level benchmarks with common backbones (GCN/GraphSAGE/SGC/GIN). More expressive operators (e.g., transformers with global attention, higher-order message passing) and domain-specific graphs (e.g., temporal, heterogeneous, or knowledge graphs) were not exhaustively studied. We expect our invariance-coupled design to transfer, but systematic validation is future work.

\paragraph{Cost reporting and engineering trade-offs.}
Our training/verification overheads are small relative to baseline training (light head, short audits), but we did not benchmark wall-clock vs.\ prior watermarking methods due to inconsistent reporting in the literature. Establishing a community benchmark for end-to-end cost, audit latency, and failure modes would benefit comparability.

\vspace{0.5em}
\noindent\textbf{Future directions.}
(1) \emph{Adaptive-adversary robustness:} min–max training against invariance-spoofing or carrier-aware attackers; collusion-resistant audits. 
(2) \emph{Invariant ensembles:} jointly learning/regularizing multiple invariants to diversify signals and increase post-edit margins. 
(3) \emph{Dynamic/heterogeneous graphs:} watermarking under temporal evolution, typed edges, and multi-relational structure. 
(4) \emph{Audit design:} sequential probability-ratio tests and public-null calibration to reduce queries while preserving $\alpha$.
(5) \emph{Lifecycle tooling:} standardized APIs for embed–verify–refresh, and integration with licensing or on-chain attestation.
(6) \emph{Theory:} tightening imperceptibility constants, robustness budgets, and characterizing when exact removal is tractable under restricted attackers.

Overall, \textbf{InvGNN-WM} delivers strong, model-integrated watermarks with broad empirical robustness and formal guarantees under practical edits; the items above outline how to extend the scope without altering the core design.

\end{document}